\definecolor{aistatsblue}{RGB}{0, 20, 115}
\DeclareMathOperator*{\argmax}{arg\,max}
\DeclareMathOperator*{\argmin}{arg\,min}
\newcommand{\mc}{\mathcal}
\newcommand{\mb}{\mathbb}
\newcommand{\s}{\text{start}}
\newcommand{\e}{\text{end}}
\newcommand{\End}{\textit{end}}
\newcommand{\Start}{\textit{start}}
\newcommand{\tdl}{\textit{true deferral loss}}
\newcommand{\sdl}{\textit{surrogate deferral loss}}
\pgfplotsset{compat=1.17}
\definecolor{cbBlue}{RGB}{31,119,180}
\definecolor{cbOrange}{RGB}{255,127,14}
\definecolor{cbGreen}{RGB}{44,160,44}
\definecolor{cbRed}{RGB}{214,39,40}
\definecolor{cbMagenta}{RGB}{148,103,189}
\begin{document}

%

%
\runningtitle{Optimal Query Allocation in EQA with LLMs} 
\runningauthor{Montreuil, Yeo, Carlier, Ng, Ooi}   

\twocolumn[

\aistatstitle{Optimal Query Allocation in Extractive QA with LLMs: A Learning-to-Defer Framework with Theoretical Guarantees}

\aistatsauthor{
Yannis Montreuil$^{*,1,4,5}$ \And
Shu Heng Yeo$^{*,1}$ \And
Axel Carlier$^{2,4}$ \AND
Lai Xing Ng$^{3,4}$ \And
Wei Tsang Ooi$^{1,4}$
}

\aistatsaddress{
$^{1}$School of Computing, National University of Singapore, Singapore \\
$^{2}$Fédération ENAC ISAE-SUPAERO ONERA, Université de Toulouse, France \\
$^{3}$Institute for Infocomm Research (A*STAR), Singapore \\
$^{4}$IPAL, IRL 2955, Singapore \\
$^{5}$CNRS@CREATE LTD, 1 Create Way, Singapore
}

]

\begin{abstract}
Large Language Models (LLMs) excel at generative language tasks but remain unreliable for structured prediction, particularly in extractive question answering (EQA), where success depends on precise span selection. These challenges are amplified in resource-constrained environments, such as mobile or embedded systems, where deploying high-capacity models is often infeasible. We propose a Learning-to-Defer framework that routes EQA queries across a pool of models with varying capabilities and costs to balance accuracy and efficiency. Our approach is grounded in statistical decision theory: we define a differentiable surrogate loss whose minimizer provably converges to the Bayes-optimal allocation policy. Experiments on SQuADv1, SQuADv2, and TriviaQA show that our method consistently improves the accuracy-efficiency trade-off relative to static baselines and prior routing heuristics. Overall, our framework provides a principled and scalable solution for EQA in both high-performance and on-device deployment settings.
\end{abstract}

\section{INTRODUCTION}

Large Language Models (LLMs) have demonstrated strong performance on a wide range of natural language processing tasks, including translation, summarization, and question answering~\citep{touvron2023llamaopenefficientfoundation, jiang2023mistral7b, openai2024gpt4technicalreport}. Their broad generalization ability, acquired through large-scale pretraining, enables fluent, context-aware responses across diverse inputs. However, these generative strengths do not necessarily transfer to high-precision structured prediction tasks. A notable example is \emph{extractive question answering} (EQA), in which the model must identify an exact span from a given passage~\citep{chen2017reading, Alqifari, lan2020albertlitebertselfsupervised}. In such settings, LLMs often produce plausible but unsupported answers, undermining reliability~\citep{sadat2023delucionqadetectinghallucinationsdomainspecific}.

Recent advances in compression and distillation have made it feasible to deploy lightweight LLM variants in resource-constrained environments, such as mobile or embedded systems, where memory, latency, and compute are limited~\citep{sun2020mobilebertcompacttaskagnosticbert, merenda2020edge, lin2024awq, egashira2024exploiting}. However, such models still struggle with fine-grained reasoning tasks like EQA, where faithful span selection is essential. One might instead deploy a specialized EQA model, but such models are inflexible for general queries, limiting their practical utility. This creates a dilemma: lightweight LLMs are versatile but error-prone on structured tasks, whereas EQA models are precise but narrow in scope. Deploying both on the same device is often infeasible under resource constraints. This trade-off motivates an adaptive hybrid approach that leverages the strengths of both model types without requiring them to be co-located.

To address this limitation, we propose a deferral-based strategy that adaptively routes queries between a lightweight on-device LLM and one or more off-device specialized EQA models. The lightweight model handles simple or low-risk inputs locally, while complex or uncertain cases are deferred to more accurate models. This balances the generality and efficiency of small LLMs with the precision of expert QA systems without requiring all models to reside on the same device~\citep{devlin2018bert, liu2019roberta, lan2020albertlitebertselfsupervised}. We formalize this approach within a \emph{Learning-to-Defer} (L2D) framework~\citep{madras2018predict, mozannar2021consistent, Verma2022LearningTD, mao2023twostage, mao2024realizablehconsistentbayesconsistentloss, montreuil2024twostagelearningtodefermultitasklearning, montreuil2025adversarial, montreuil2026why}, where a learned policy assigns each input to the model offering the best accuracy-cost trade-off. Unlike prior heuristic or confidence-based routing, our policy minimizes a differentiable surrogate loss that provably converges to the Bayes-optimal allocation under mild conditions.

\section{RELATED WORK}

\paragraph{Model Cascades.}
Model cascades~\citep{990517, jitkrittum2024doesconfidencebasedcascadedeferral, JMLR:v15:saberian14a} process a query through a sequence of models, forwarding it to the next stage only if a confidence-based criterion fails to meet a predefined threshold. These thresholds aim to balance predictive performance and computational cost. Although recent work has adapted cascades to LLMs~\citep{kolawole2024agreementbasedcascadingefficientinference, yue2023large}, such methods are not tailored to the EQA setting. Moreover, cascade-based designs often struggle to accommodate heterogeneous models—e.g., mixing span-predicting EQA models with free-form generative LLMs—due to incompatible output formats~\citep{varshney2022modelcascadingjointlyimproving}. As models are added to the cascade, inference latency increases and optimal predictions may be delayed. \emph{Agreement-Based Cascading}~\citep{narasimhan2024fastercascadesspeculativedecoding} uses ensemble agreement at each stage to decide whether to escalate the query. While this improves robustness, it still suffers from the limitations of sequential inference.

\paragraph{Query Routing.}
Query routing~\citep{ding2024hybridllmcostefficientqualityaware, ong2024routellmlearningroutellms, osti_10447699, 10.14778/3574245.3574273, stojkovic2025dynamollm, chen2024routerdc} aims to improve efficiency by learning to dispatch each query among models, trading off pools or instances of fast, low-capacity models against a slower, higher-accuracy alternative~\citep{chen2025harnessing}. Routing decisions are guided by estimates of input difficulty or task-specific quality requirements, and are particularly relevant in edge and resource-constrained deployments where latency and energy consumption are critical~\citep{qu2025mobile}. Recent work extends this paradigm beyond binary routing to allow selection among multiple candidate models~\citep{lu-etal-2024-routing, Ding2025BESTRouteAL}, which better reflects real-world deployment choices. However, existing approaches either lack Bayes-consistent guarantees, do not address structured span prediction settings, or neglect practical deployment costs such as latency and token expenditure.

\paragraph{Structured Output Abstention.} Structured Output Abstention~\citep{pmlr-v80-garcia18a} allows a model to withhold predictions on components of a structured output while incurring a pre-specified abstention cost. While seemingly related to query allocation, it addresses a different decision problem. Specifically, abstention focuses on \emph{when} a model should refrain from predicting. In contrast, query allocation determines which model or expert should produce the prediction, explicitly routing the input to another decision maker under a cost-quality trade-off.

\paragraph{Learning-to-Defer.}
Learning-to-Defer (L2D) frames query allocation as a principled classification-with-deferral problem, where the learner can route inputs either to a model or to one of several experts under a cost-quality trade-off~\citep{madras2018predict, mozannar2021consistent, Verma2022LearningTD}. Subsequent work has pursued refinements of surrogate design~\citep{charusaie2022sample, mao2024principledapproacheslearningdefer, montreuil2026beyond, montreuil2026learningtodeferexpertconditionedadvice}; theoretical guarantees such as $\mathcal{H}$-consistency and realizability~\citep{Mozannar2023WhoSP, mao2024realizablehconsistentbayesconsistentloss, mao2025mastering, mao2025thesis}; top-$k$ deferral and online or non-stationary regimes~\citep{montreuil2026why, montreuil2026online, montreuil2026learningdefernonstationarytime}; adversarial robustness~\citep{montreuil2025adversarial, montreuil2026adversarial}; and budgeted or imbalanced deferral~\citep{desalvo2025budgeted, cortes2026optimized}. Two-stage formulations decouple the predictor from the allocation policy~\citep{mao2023twostage, montreuil2024twostagelearningtodefermultitasklearning}. The consistency of our deferral surrogate ultimately inherits from $\mathcal{H}$-consistency bound theory~\citep{Awasthi_Mao_Mohri_Zhong_2022_multi, mao2024h}, whose guarantees have been instantiated for abstention and rejection~\citep{theoretically, Mao_Mohri_Zhong_2023, mohri2024learningreject}, for ranking and cardinality-aware set prediction---directly relevant to selecting answer spans in EQA~\citep{cortes2024cardinalityaware, mao2023pairwisemisranking, mao2023rankingabstention}, for structured outputs~\citep{mao2023structuredprediction}, and for adversarially robust surrogates~\citep{awasthi2021calibrationconsistencyadversarialsurrogate, Grounded}, with further algorithmic developments in generalized-metric optimization and robust generative modeling~\citep{mohri2026generalized, mohri2026principled, cortes2026theoretical}. Our work specializes this framework to extractive QA with LLMs, treating heterogeneous LLMs as experts in a one-stage allocation scheme.

\paragraph{Contributions.}
We advance EQA under resource constraints by unifying statistical decision theory with dynamic model selection. We propose a hybrid setting that combines the multi-expert flexibility of cascades with the direct-dispatch paradigm of routing. This generalization enables richer trade-offs by allowing queries to be dispatched directly across several experts with potentially different architectures.

(i) We introduce a new principled framework for cost-sensitive model selection in EQA, deriving a consistent, end-to-end trainable loss tailored to deferral-based architectures.
(ii) We establish formal guarantees showing that our learned deferral policy provably converges to the Bayes-optimal allocation, offering theoretical insight into the limits of adaptive routing.
(iii) Through extensive experiments on SQuADv1, SQuADv2, and TriviaQA, we demonstrate that our method consistently outperforms existing routing strategies and heuristic baselines.

\section{PRELIMINARIES} \label{preliminary}
\paragraph{Extractive QA.}
We consider EQA, where the answer \(a\) must be returned as a contiguous span in a context \(c\) given a question \(q\). Let the random pair \((X,Y)\sim\mathcal D\) denote a draw from an unknown data-generating distribution.
Throughout, upper-case symbols (e.g.\ \(X\)) denote random variables,
while lower-case symbols (e.g.\ \(x\)) denote their fixed realizations. A specific instance is \(x=(q,c)\in\mathcal X\).
Its label is the span
\(y=(y^{\text{start}},y^{\text{end}})\in\mathcal Y\) with
\(0\le y^{\text{start}}\le y^{\text{end}}<|c|\); hence
\(\mathcal Y\) factorises as
\(\mathcal Y^{\text{start}}\!\times\!\mathcal Y^{\text{end}}\).
Following prior work
\citep{devlin2018bert, liu2019roberta, lan2020albertlitebertselfsupervised},
we assume the start and end indices are conditionally independent given
the input.
All samples are i.i.d.\ according to \(\mathcal D\) \citep{Mohri}.
For later use, let \(\mathcal D^{i}\) be the marginal of \(\mathcal D\) on
\( \mathcal X\times\mathcal Y^{i}\); i.e., \((X,Y^i)\sim\mathcal D^{i}\).

The EQA model is defined as a parametric function \( g \in \mathcal{G} \), composed of a feature extractor \( w \in \mathcal{W} \) and a span predictor \( h \in \mathcal{H} \). The extractor maps inputs to latent representations via \( w: \mathcal{X} \to \mathcal{T} \), with \( t = w(x) \in \mathcal{T} \). These representations are scored by the classifier \( h = (h^{\text{start}}, h^{\text{end}}) \), where each head \( h^i: \mathcal{T} \times \mathcal{Y}^i \to \mathbb{R} \) defines a position-wise scoring function. Predictions are then made according to the maximization rule \( g^i(x) = \arg\max_{y \in \mathcal{Y}^i} h^i(w(x), y) \), where \( w \in \mathcal{W} \) and \( h^i \in \mathcal{H} \) denote a shared representation map and task-specific scoring function, respectively. The overall model is defined by composition as \( g = h \circ w \), inducing the function class \( \mathcal{G} = \{ g \mid g(x) = h \circ w(x),\; w \in \mathcal{W},\; h \in \mathcal{H} \} \).

Model training typically minimizes the \emph{joint 0--1 loss}, which counts the number of incorrect predictions across both span endpoints. This loss is defined as \( \ell_{01}^{\text{joint}}: \mathcal{Y} \times \mathcal{Y} \to \{0,1,2\} \), where
\begin{equation*}
\ell_{01}^{\text{joint}}(g(x), y) = \mathbf{1}\left[g^{\text{start}}(x) \neq y^{\text{start}}\right] + \mathbf{1}\left[g^{\text{end}}(x) \neq y^{\text{end}}\right].
\end{equation*}
Although non-differentiable, this loss provides an intuitive and interpretable measure of token-matching performance in EQA.

\paragraph{Bayes and $\mc{G}^i$-consistency.}
Let \( i \in \{\text{start}, \text{end}\} \). The learning objective is to find a predictor \( g^i \in \mathcal{G}^i \) that minimizes the expected 0--1 error,
\[
\mathcal{E}_{\ell_{01}}(g^{i})
  \;=\;
  \mathbb{E}_{(X,Y^i) \sim \mathcal D^i}
  \bigl[
     \ell_{01}\bigl(g^{i}(X),\,Y^{i}\bigr)
  \bigr].
\]
\noindent The Bayes-optimal error is defined as
\[
\mathcal{E}^B_{\ell_{01}}(\mathcal{G}^i) = \inf_{g^i \in \mathcal{G}^i} \mathcal{E}_{\ell_{01}}(g^i).
\]
However, direct minimization is intractable due to the discontinuity and non-convexity of the multiclass 0--1 loss \( \ell_{01} \)~\citep{Statistical, Steinwart2007HowTC, Awasthi_Mao_Mohri_Zhong_2022_multi, mao2024hconsistencyregression, mao2024multilabel, mao2024universalgrowth, mao2025enhanced, zhong2025thesis, mohri2026beyond, cortes2025improvedbalanced, cortes2025balancingscales, mao2025principledbinary, mohri2026linear, mohri2026mind}.

To overcome this, we adopt a family of convex surrogate losses \( \Phi_{01}^\nu: \mathcal{G}^i \times \mc{X} \times \mathcal{Y}^i \to \mathbb{R}_+ \), parameterized by \( \nu \geq 0 \), which upper bound \( \ell_{01} \). This family subsumes common losses such as log-softmax (for \( \nu = 1 \))~\citep{Mohri} and the mean absolute error (for \( \nu = 2 \))~\citep{Ghosh2017RobustLF}, and is defined as:
\begin{equation}\label{eq:comp-sum}
\Phi_{01}^\nu(g^i, x, y^i) =
\begin{cases}
\displaystyle \frac{1}{1 - \nu} \left( \Psi(g^i, x, y^i)^{1 - \nu} - 1 \right) & \text{if } \nu \neq 1, \\[8pt]
\displaystyle \log \Psi(g^i, x, y^i) & \text{if } \nu = 1,
\end{cases}
\end{equation}
where \( \Psi(g^i,x, y^i) = \sum_{y' \in \mathcal{Y}^i} \exp\left( g^i(x, y') - g^i(x, y^i) \right) \). The expected surrogate risk is given by
$
\mathcal{E}_{\Phi_{01}^\nu}(g^i) = \mathbb{E}_{(X,Y^i) \sim \mc{D}^i}[\Phi_{01}^\nu(g^i, X,Y^i)],
$ with corresponding infimum
$
\mathcal{E}^*_{\Phi_{01}^\nu}(\mathcal{G}^i) = \inf_{g^i \in \mathcal{G}^i} \mathcal{E}_{\Phi_{01}^\nu}(g^i).
$

A central property of surrogate losses is \emph{Bayes consistency}, which ensures that minimizing surrogate risk also minimizes true risk. Specifically, \( \Phi_{01}^\nu \) is Bayes-consistent with respect to \( \ell_{01} \) if, for any sequence \( \{g_k^i\}_{k \in \mathbb{N}} \subset \mathcal{G}^i \),
\begin{equation}\label{bayes-consi}
\begin{aligned}
\mathcal{E}_{\Phi_{01}^\nu}(g_k^i) - \mathcal{E}_{\Phi_{01}^\nu}^*(\mathcal{G}^i_{\text{all}}) \xrightarrow{k \to \infty} 0
\quad \Longrightarrow \\ \quad
\mathcal{E}_{\ell_{01}}(g_k^i) - \mathcal{E}_{\ell_{01}}^B(\mathcal{G}^i_{\text{all}}) \xrightarrow{k \to \infty} 0.
\end{aligned}
\end{equation}

While this implication holds when \( \mathcal{G}^i = \mathcal{G}^i_{\text{all}} \), it need not hold under hypothesis class restrictions such as \( \mathcal{G}^i_{\text{lin}} \) or \( \mathcal{G}^i_{\text{ReLU}} \)~\citep{pmlr-v28-long13, Awasthi_Mao_Mohri_Zhong_2022_multi}. To address this, \citet{Awasthi_Mao_Mohri_Zhong_2022_multi} introduce \emph{\( \mathcal{G}^i \)-consistency} bounds, which quantify surrogate-to-true error transfer via a monotonic function \( \Gamma: \mathbb{R}_+ \to \mathbb{R}_+ \):
\begin{equation}\label{mhbc}
\begin{aligned}
& \mathcal{E}_{\Phi_{01}^\nu}(g^i) - \mathcal{E}_{\Phi_{01}^\nu}^*(\mathcal{G}^i) + \mathcal{U}_{\Phi_{01}^\nu}(\mathcal{G}^i)
\geq \\
& \qquad \Gamma\left( \mathcal{E}_{\ell_{01}}(g^i) - \mathcal{E}_{\ell_{01}}^B(\mathcal{G}^i) + \mathcal{U}_{\ell_{01}}(\mathcal{G}^i) \right),
\end{aligned}
\end{equation}
where the minimizability gap $\mathcal U_{\ell_{01}}(\mathcal G^{i})
\;=\;
\mathcal E_{\ell_{01}}^{B}(\mathcal G^{i})
-\mathbb E_{X\sim\mathcal D^{i}_{X}}
\Bigl[
   \inf_{g^{i}\in\mathcal G^{i}}
      \mathbb E_{Y^{i}\sim\mathcal D^{i}(\,\cdot\mid X)}
      \bigl[\ell_{01}\bigl(g^{i}(X),Y^{i}\bigr)\bigr]
\Bigr]$ captures the irreducible bias induced by function class \( \mathcal{G}^i \). Notably, this gap vanishes when \( \mathcal{G}^i = \mathcal{G}^i_{\text{all}} \)~\citep{Steinwart2007HowTC, Awasthi_Mao_Mohri_Zhong_2022_multi}, in which case inequality~\eqref{mhbc} recovers the standard Bayes consistency property in~\eqref{bayes-consi}.

\section{OPTIMAL ALLOCATION FOR EQA SYSTEMS} \label{method}

\begin{figure*}[t]
    \centering
    \begin{tikzpicture}[
        block/.style={
            rectangle, draw, rounded corners,
            align=center,
            minimum height=0.5cm,
            minimum width=2cm,
            font=\small
        },
        arrow/.style={-{Stealth[scale=1.2]}, thick}
    ]

    \node[block, fill=yellow!20] (q) {Question\\and Context};

    \node[block, fill=orange!20, right=1.3cm of q, yshift=1cm] (pr1) {Start\\Rejector};
    \node[block, fill=orange!20, right=1.3cm of q, yshift=-1cm] (pr2) {End\\Rejector};

    \node[draw, dashed, inner sep=10pt, rounded corners, fit=(pr1)(pr2)] (rf) {};
    \node[above=5pt of rf.north] {Rejector Framework};

    \draw[arrow] (q) -- (rf);

    \node[block, fill=purple!20, right=1cm of rf] (sel) {Optimal Decision\\$\hat{\pi}(x)$};
    \node[above=5pt of sel.north] {Cor.~\ref{cor:learn}};
    \draw[arrow] (rf) -- (sel);

    \node[block, fill=blue!20, right=1cm of sel, yshift=1.5cm] (m) {LLM};
    \node[block, fill=red!20, right=1cm of sel] (e1) {Expert 1};
    \node[block, fill=green!20, right=1cm of sel, yshift=-1.5cm] (e2) {Expert $J$};

    \path (e1) -- (e2) node[midway] {\vdots};

    \node[draw, dashed, inner sep=10pt, rounded corners, fit=(m)(e2)] (b1) {};
    \node[above=5pt of b1.north] {Available Agents};

    \draw[arrow] (sel) -- (b1);

    \node[block, fill=teal!20, right=1cm of e1] (pa) {Predicted Answer};
    \draw[arrow] (b1) -- (pa);

    \end{tikzpicture}
    \captionsetup{skip=2\baselineskip}
    \caption{
    Overview of our approach. Rejectors estimate start and end uncertainty, inducing an optimal allocation policy $\hat{\pi}(x)$ (Corollary~\ref{cor:learn}) that routes the query to an appropriate agent for answer prediction (Algorithm~\ref{algo_inference1}).
    }
\end{figure*}

In this section, we formalize the problem of allocating queries \( x \in \mathcal{X} \) across a set of agents comprising a primary model \( g \) and a collection of \( J \) expert models. Our objective is to learn an allocation policy that assigns each query to the agent most likely to produce a correct prediction while controlling overall computational cost.

Importantly, we show that the proposed formulation admits a Bayes-optimal solution: under mild assumptions, there exists an allocation strategy that asymptotically minimizes expected error. This result provides the theoretical foundation for our learning-to-defer framework and guarantees that the learned deferral policy approaches optimal performance in the limit.

\subsection{Formulating the Allocation Problem}

\paragraph{Setting.}
We consider a query allocation setup involving a primary model \( g \in \mathcal{G} \) and a collection of \( J \) pre-trained expert models, collectively referred to as \emph{agents}. The set of available agents is indexed by \( \mathcal{A} = \{0\} \cup [J] \), where agent \( 0 \) corresponds to the main model and \( [J] = \{1, \dots, J\} \) indexes the experts, yielding \( J + 1 \) agents in total. All agents are assumed to be fixed and trained offline; the objective is to learn a deferral policy that dynamically allocates each query \( x \in \mathcal{X} \) to one of the agents at inference time~\citep{mao2023twostage, mao2024regressionmultiexpertdeferral, mao2024realizablehconsistentbayesconsistentloss, mao2025mastering, montreuil2024twostagelearningtodefermultitasklearning, montreuil2025adversarial, montreuil2026why}. Each expert \( \mathrm{M}_j \), when queried on an input \( x \), produces a span prediction in the form of start and end token indices, denoted \( m_j^{\text{start}}(x) \in \mathcal{Y}^{\text{start}} \) and \( m_j^{\text{end}}(x) \in \mathcal{Y}^{\text{end}} \), respectively. These agents may correspond to human annotators, pretrained neural models, or other predictive systems. We denote the full set of expert predictions as \( m(x) = (m_1(x), \dots, m_J(x)) \in \mathcal{M} \), where each \( m_j(x) = (m_j^{\text{start}}(x), m_j^{\text{end}}(x)) \) represents the answer span returned by expert \( j \).

\paragraph{True Deferral Loss.}
To enable cost-sensitive allocation of queries among multiple \emph{agents}, we define a \emph{rejector} \( r \in \mathcal{R} \), which maps each input \( x \in \mathcal{X} \) to an agent index in \( \mathcal{A} \). The rejector is decomposed into two components—\( r^{\text{start}} \in \mathcal{R}^{\text{start}} \) and \( r^{\text{end}} \in \mathcal{R}^{\text{end}} \)—corresponding to independent deferral decisions for the start and end span predictions. Each \( r^i \in \mathcal{R}^i \), for \( i \in \{\text{start}, \text{end}\} \), is a scoring function \( r^i: \mathcal{X} \times \mathcal{A} \to \mb{R} \) that selects the agent with maximal score:
\[
r^i(x) = \arg\max_{j \in \mathcal{A}} r^i(x, j).
\]
To learn such rejectors, we adopt the \emph{True Deferral Loss (TDL)} from~\citet{mao2023twostage}, adapted here to structured prediction in EQA.

\begin{restatable}[True Deferral Loss]{definition}{deferral}
\label{def:true_deferral_loss}
Given an input \( x \in \mathcal{X} \) and a rejector \( r \in \mathcal{R} \), the true deferral loss is defined as
\[
\ell_{\text{def}}(r(x), y) =\mspace{-23mu}  \sum_{i \in \{\text{start}, \text{end}\}} \sum_{j = 0}^J c_j(x,y^i)  \mathbf{1}\{r^i(x) = j\},
\]
\end{restatable}

\noindent where \( c_j \) denotes the cost of assigning input \( x \) to agent \( j \). For the main model \( g \), this cost is defined as
\[
c_0\big(x,y^i\big) = \mathbf{1}\big\{g^i(x) \neq y^i\big\},
\]
which penalizes incorrect predictions. For expert \( j > 0 \), the cost incorporates both prediction error and invocation penalty:
\[
c_j\big(x,y^i\big) = \alpha_j  \mathbf{1}\{m_j^i(x) \neq y^i\} + \beta_j,
\]
where \( \alpha_j \geq 0 \) scales the error penalty and \( \beta_j \geq 0 \) models the cost of consultation. Notably, setting \( \alpha_j = 0 \) reduces expert \( j \) to an oracle: always correct but not free to query~\citep{Chow_1970, cortes}.

The deferral decision thus hinges on minimizing expected deferral cost while maintaining high accuracy. When \( r^i(x) = 0 \), the query is assigned to the main model; otherwise, if \( r^i(x) = j > 0 \), it is deferred to expert \( j \), whose prediction \( m_j^i(x) \) incurs a penalty based on both error and query cost. A principled deferral strategy must therefore balance predictive reliability with the expense of expert consultation.
\subsection{Optimality of the Allocation}
An ideal deferral strategy allocates each query \( x \in \mathcal{X} \) to the agent most likely to predict correctly, thereby maximizing reliability and decision confidence~\citep{madras2018predict}. To formalize this intuition, we analyze the Bayes-optimal risk under the \tdl{} and characterize the \emph{Bayes-rejector}—the rejection function that minimizes expected deferral cost.

Let $\mathcal D^{i}(\,\cdot\mid X=x)$
denote the conditional distribution of the label
\(Y^{i}\) given an input \(x\), where
\((X,Y^{i})\sim\mathcal D^{i}\).  For any \(x\in\mathcal X\) we define the per-task, per-agent
\emph{conditional risks}
\begin{align}
\eta_{j}^{i}(x)=
  \mathbb E_{Y^i \sim \mc{D}^i(\cdot|X=x)}\!\bigl[c_{j}(x,Y^{i})\bigr]
\end{align}
which leads to $\eta_{0}^{i}(x) = 
  \Pr_{Y^{i}\sim\mathcal D^{i}(\cdot\mid X=x)}
     \bigl[g^{i}(x)\neq Y^{i}\bigr]$
and for expert $\eta_{j}^{i}(x) =
  \alpha_{j}\,
  \Pr_{Y^{i}\sim\mathcal D^{i}(\cdot\mid X=x)}
     \bigl[m_{j}^{i}(x)\neq Y^{i}\bigr]
  + \beta_{j}$, where \(\alpha_{j}\ge 0\) scales the expert’s prediction error and
\(\beta_{j}\ge 0\) is its fixed query cost.

\begin{restatable}[Bayes-Rejector]{lemma}{rejector}
\label{lemma:rejector}
Given an input \( x \in \mathcal{X} \) and any distribution \( \mathcal{D} \), the Bayes-optimal rejector that minimizes the conditional \tdl{} is
\[
r^{B,i}(x) =
\begin{cases}
0, & \text{if } \inf_{g^i \in \mathcal{G}^i} \eta_0^i(x) \leq \min_{j \in [J]} \eta_j^i(x), \\
j^\ast, & \text{otherwise},
\end{cases}
\]
where $j^\ast = \arg\min_{j \in [J]} \eta_j^i(x)$.
\end{restatable}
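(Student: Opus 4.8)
The plan is to reduce the minimization of the deferral risk to a pointwise (in $x$) minimization of the conditional risk, and then to read off the optimal agent by directly comparing the per-agent conditional costs. First I would exploit the additive structure of $\ell_{\text{def}}$ over $i \in \{s,e\}$: because the \tdl{} is a sum over the \Start{} and \End{} components, and the labels $y^s, y^e$ are conditionally independent given $x$ (as assumed in Section~\ref{preliminary}), the expected risk $\mathbb{E}_z[\ell_{\text{def}}]$ splits into two independent terms, one per $i$. Consequently it suffices to fix $i \in \{s,e\}$ and minimize the conditional $i$-risk $\mathbb{E}_{y^i \mid x}[\,\cdot \mid X = x\,]$ separately, since minimizing the expectation over $x$ of a nonnegative conditional risk is achieved by minimizing that conditional risk pointwise.

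Next I would compute the conditional risk explicitly. Because the rejector assigns exactly one agent, the indicators $1_{\{r^i(x)=j\}}$ form a partition of $\mathcal{A}$, so the conditional risk collapses to the expected cost of the single selected agent:
\begin{equation*}
\mathbb{E}_{y^i \mid x}\big[\ell_{\text{def}}^i \mid X = x\big] = \sum_{j=0}^{J} \overline{c}_j^i(x)\, 1_{\{r^i(x) = j\}}, \qquad \overline{c}_j^i(x) = \mathbb{E}_{y^i \mid x}\big[c_j\big].
\end{equation*}
Taking expectations of the indicator-valued costs gives $\overline{c}_0^i(x) = \mathcal{D}(g^i(x) \neq y^i \mid X = x) = \eta_0^i(x)$ for the main model, and $\overline{c}_j^i(x) = \alpha_j\,\mathcal{D}(m_j^i(x) \neq y^i \mid X = x) + \beta_j = \eta_j^i(x)$ for each expert $j \in [J]$. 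Thus the conditional risk simply selects one of the quantities $\eta_j^i(x)$, $j \in \mathcal{A}$, according to the rejector.

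It remains to minimize this selection. The experts are fixed offline, so each $\eta_j^i(x)$ with $j>0$ is independent of the main model, whereas $\eta_0^i(x)$ depends on $g^i$. To characterize the Bayes-optimal allocation I would minimize jointly over the pair $(g^i, r^i)$: choosing $r^i(x) = 0$ incurs the best attainable main-model cost $\inf_{g^i \in \mathcal{G}^i} \eta_0^i(x)$, while choosing $r^i(x) = j > 0$ incurs $\eta_j^i(x)$. The pointwise-optimal rejector therefore compares $\inf_{g^i \in \mathcal{G}^i} \eta_0^i(x)$ against $\min_{j \in [J]} \eta_j^i(x)$, deferring to the main model when the former does not exceed the latter (adopting a tie-breaking convention favoring $g$) and to $j^\ast = \argmin_{j \in [J]} \eta_j^i(x)$ otherwise, which is exactly the claimed $r^{B,i}$.

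The main obstacle I anticipate is the joint treatment of the main model and the rejector — specifically, justifying that the infimum over $g^i \in \mathcal{G}^i$ may be taken inside the expectation over $x$ while still yielding a valid, measurable minimizer. This interchange is precisely what the minimizability gap $\mathcal{U}_{\ell_{01}}(\mathcal{G}^i)$ from Section~\ref{preliminary} is designed to track: it vanishes when $\mathcal{G}^i = \mathcal{G}^i_{\text{all}}$, in which case the pointwise infimum is attained and the characterization is exact, but is otherwise nonzero. Care with the tie case, and with verifying that the pointwise selection induces a genuinely optimal global allocation, completes the argument.
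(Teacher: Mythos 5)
Your proposal is correct and follows essentially the same route as the paper's proof: both reduce the \tdl{} risk to the pointwise conditional risk, identify the expected cost of each agent as $\eta_j^i(x)$, and obtain the Bayes-rejector by jointly minimizing over $(g^i, r^i)$, comparing $\inf_{g^i\in\mathcal{G}^i}\eta_0^i(x)$ against $\min_{j\in[J]}\eta_j^i(x)$. Your additional remarks on the per-$i$ decomposition via conditional independence and on the measurability/minimizability-gap subtlety are sound refinements, but they do not change the core argument, which matches the paper's.
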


\noindent A proof is provided in the appendix. Lemma~\ref{lemma:rejector} formalizes the decision rule induced by the true deferral loss: defer only when an expert exhibits strictly lower expected risk than the main model. This ensures that each query is routed to the lowest-risk agent under the conditional distribution induced by \( \mathcal{D} \), yielding asymptotically optimal deferral performance.

\paragraph{Computational Challenge.}
While Lemma~\ref{lemma:rejector} prescribes the optimal deferral strategy, learning the Bayes-rejector is computationally intractable in practice. The problem is known to be NP-hard~\citep{Zhang, Steinwart2007HowTC, bartlett1, Mohri}, due to the discontinuity and non-convexity of the \tdl{}, which complicates optimization. This difficulty is characteristic of many structured prediction tasks where exact minimization of non-differentiable loss functions is infeasible~\citep{cortes, mao2023crossentropylossfunctionstheoretical}. In the next subsection, we propose a surrogate formulation that approximates the Bayes-rejector while preserving its theoretical guarantees.

\subsection{Accurate Approximation of the True Deferral Loss}  
To approximate the \tdl{} while preserving the optimality of the decision rule in Lemma~\ref{lemma:rejector}, we leverage tools from consistency theory as formalized in Section~\ref{preliminary}. A standard approach in statistical learning is to introduce a \emph{surrogate loss}, that is, a differentiable proxy for a target loss. In our setting, the goal is to construct a surrogate for the \tdl{} that is both Bayes-consistent and \((\mathcal{G}, \mathcal{R})\)-consistent, ensuring that minimizing the surrogate yields a rejector \( r^{\ast, i} \) that converges to the Bayes-optimal rejector from Lemma~\ref{lemma:rejector}. In particular, we aim to guarantee that, as the surrogate loss is minimized, the learned deferral strategy asymptotically approaches the optimal allocation rule.

\paragraph{Formulating the Surrogate Deferral Loss.}
To construct a tractable alternative to the non-differentiable true deferral loss (TDL), we adopt the cross-entropy multiclass surrogate family \( \Phi_{01}^\nu: \mathcal{R}^i \times \mathcal{X} \times \mathcal{A} \to \mathbb{R}_+ \), which upper bounds the multiclass 0--1 loss and enjoys favorable optimization properties. Following~\citet{mao2023twostage}, we adapt this surrogate to our structured EQA setting to define the \emph{Surrogate Deferral Loss (SDL)}.

\begin{restatable}[Surrogate Deferral Loss]{lemma}{surrogate}
\label{lemma:surrogate}
Given an input \( x \in \mathcal{X} \) and a labeled instance \((x, y) \), the surrogate deferral loss is defined as
\[
\Phi_{\mathrm{def}}^\nu(r, x,y) = \sum_{i \in \{\text{start}, \text{end}\}} \sum_{j = 0}^{J} \tau_j(x,y^i)  \Phi_{01}^\nu\big(r^i, x, j\big),
\]
where \( \tau_j(x,y^i) = \sum_{q \neq j} c_q(x,y^i) \) quantifies the nonnegative relative cost gap between agent \( j \) and the competing agents.
\end{restatable}
We give the proof of Lemma~\ref{lemma:surrogate} in the appendix. The term \( \tau_j(x,y^i) \) reflects a soft preference for agent \( j \) by measuring how much worse the alternatives are in terms of cost. Intuitively, minimizing the SDL encourages the rejector \( r^i \) to assign queries to agents with lower relative cost. The surrogate loss \( \Phi_{01}^\nu \) is typically instantiated as a log-softmax and serves as a smooth approximation to the discontinuous 0--1 decision boundary.

\vspace{\baselineskip}
\begin{algorithm}[ht]
   \caption{Training the rejector $r$}\label{algo_training1}
\begin{algorithmic}
   \STATE {\bfseries Input:} Dataset $\{(x_k, y_k^\s, y_k^\e)\}_{k=1}^K$, multi-task model $g\in\mc{G}$, experts $m\in\mc{M}$, rejectors $r=(r^\s, r^\e)$, number of epochs $\text{EPOCH}$, batch size BATCH, learning rate $\lambda$, surrogate parameter $\nu$. 
   \STATE {\bfseries Initialization:} Initialize  parameters $\theta=(\theta^\s, \theta^\e)$.
   \FOR{$i=1$ to $\text{EPOCH}$}
       \STATE Shuffle dataset $\{(x_k, y_k^\s, y_k^\e)\}_{k=1}^K$.
       \FOR{each $\mathcal{B} \subset \{(x_k, y_k^\s, y_k^\e)\}_{k=1}^K$ of size BATCH}
           \STATE Extract input-output pairs $z=(x, y^\s, y^\e) \in \mathcal{B}$.
           \STATE Query model $g(x)$ and experts $m(x)$. 
           \STATE Evaluate costs $c_j(x, y^\s)$ and $c_j(x, y^\e)$
           \STATE Compute the regularized empirical risk minimization:
           \STATE \hspace{1em} $\widehat{\mc{E}}_{\Phi_{\text{def}}}(r;\theta) = \frac{1}{\text{BATCH}} \sum_{z \in \mathcal{B}} \Big[ \Phi_{\text{def}}^\nu(r,x,y) \Big]$.
           \STATE Update parameters $\theta$:
           \STATE \hspace{1em} $\theta \leftarrow \theta - \lambda \nabla_\theta \widehat{\mc{E}}_{\Phi_{\text{def}}}(r;\theta)$. 
       \ENDFOR
   \ENDFOR
   \STATE \textbf{Return:} trained rejector model $\hat{r}$.
\end{algorithmic}
\end{algorithm}
\vspace{\baselineskip}

This surrogate formulation preserves the agent-comparison structure of the TDL while introducing differentiability, enabling end-to-end training via stochastic gradient descent. As such, it supports efficient integration with standard deep learning frameworks~\citep{bartlett1}, allowing scalable learning of deferral strategies. We give the detailed training procedure in Algorithm~\ref{algo_training1}. 

In the next subsection, we analyze the theoretical guarantees of the SDL. Specifically, we establish that under suitable conditions, minimizing \( \Phi_{\mathrm{def}}^\nu \) yields a rejector \( r^i \) that approximates the Bayes-optimal deferral rule. This \emph{Bayes consistency} ensures that our surrogate not only facilitates optimization but also preserves statistical optimality in the asymptotic regime.

\subsection{Theoretical Guarantees of the Surrogate Deferral Loss}

\paragraph{Consistency Guarantees.} In the previous subsection, we introduced the \sdl{} as a differentiable surrogate to approximate the true deferral loss \tdl{}. We now establish that minimizing the surrogate excess risk leads to a reduction in the true excess deferral risk. That is,
\[
\mathcal{E}_{\Phi_{\text{def}}^\nu}(r) - \mathcal{E}_{\Phi_{\text{def}}^\nu}^\ast(\mathcal{R}) + \mathcal{U}_{\Phi_{\text{def}}^\nu}(\mathcal{R})
\]
acts as a valid upper bound for
\[
\mathcal{E}_{\ell_{\text{def}}}(g, r) - \mathcal{E}_{\ell_{\text{def}}}^B(\mathcal{G}, \mathcal{R}) + \mathcal{U}_{\ell_{\text{def}}}(\mathcal{G}, \mathcal{R}),
\]
thereby implying that any minimizer \( r^\ast \in \mathcal{R} \) of the surrogate loss closely approximates the Bayes-optimal rejector \( r^B \), as defined in Lemma~\ref{lemma:rejector}.

\begin{restatable}[$(\mathcal{R}, \mathcal{G})$-Consistency]{theorem}{theoconsistency}
\label{theo:consistency}
Let \( x \in \mathcal{X} \) and let \( \mathcal{D} \) denote any distribution over \( \mathcal{X} \times \mathcal{Y} \). Fix any \( \nu \geq 0 \), and suppose there exists a non-decreasing, concave function \( \Gamma^\nu : \mathbb{R}^+ \to \mathbb{R}^+ \) such that, for all \( r \in \mathcal{R} \),
\begin{equation*}
    \begin{aligned}
        & \mathcal{E}_{\Phi_{01}^\nu}(r) - \mathcal{E}^*_{\Phi_{01}^\nu}(\mathcal{R}) + \mathcal{U}_{\Phi_{01}^\nu}(\mathcal{R}) \geq \\
& \qquad\Gamma^\nu\left(\mathcal{E}_{\ell_{01}}(r) - \mathcal{E}^B_{\ell_{01}}(\mathcal{R}) + \mathcal{U}_{\ell_{01}}(\mathcal{R})\right)
    \end{aligned}   
\end{equation*}
Then, for any \( (g, r) \in \mathcal{G} \times \mathcal{R} \),
\begin{equation*}
    \begin{aligned}
& \mathcal{E}_{\ell_{\text{def}}}(g, r) - \mathcal{E}^B_{\ell_{\text{def}}}(\mathcal{G}, \mathcal{R}) + \mathcal{U}_{\ell_{\text{def}}}(\mathcal{G}, \mathcal{R})
\leq \\
& \qquad \overline{\Gamma}^\nu\left( \mathcal{E}_{\Phi^\nu_{\text{def}}}(r) - \mathcal{E}^*_{\Phi^\nu_{\text{def}}}(\mathcal{R}) + \mathcal{U}_{\Phi^\nu_{\text{def}}}(\mathcal{R}) \right) \\
&\qquad+ \sum_{i \in \{\s, \e\}} \left( \mathcal{E}_{c_0}(g^i) - \mathcal{E}^B_{c_0}(\mathcal{G}^i) + \mathcal{U}_{c_0}(\mathcal{G}^i) \right),
\end{aligned}
\end{equation*}
where the rescaled function is defined by
\begin{equation*}
    \begin{aligned}
        \overline{\Gamma}^\nu(u) = \left( \sum_{i \in \{\s, \e\}} \mspace{-30mu}\|\boldsymbol{\overline{\tau}^i}\|_1 \right)\Gamma^\nu\left( \frac{u}{\sum_{i \in \{\s, \e\}} \|\boldsymbol{\overline{\tau}^i}\|_1} \right),
    \end{aligned}
\end{equation*}
and where the expected cost weights are given by \( \boldsymbol{\overline{\tau}^i} = \{ \mathbb{E}_{Y^i \sim \mc{D}^i(\cdot|X=x)}[\tau_j(x, Y^i)] \}_{j \in \mathcal{A}} \).
\end{restatable}

The proof is deferred to the Appendix. Intuitively, Theorem~\ref{theo:consistency} establishes a quantitative link between surrogate and true deferral risk minimization. The first term in the bound captures the impact of optimizing the surrogate objective, while the second term quantifies the approximation error due to holding the predictor \( g \) fixed. Assuming, 
\begin{align*}
&\mathcal{E}_{\Phi^\nu_{\text{def}}}(r) - \mathcal{E}^*_{\Phi^\nu_{\text{def}}}(\mathcal{R}) + \mathcal{U}_{\Phi^\nu_{\text{def}}}(\mathcal{R}) \leq \epsilon_0, \\
& \quad \sum_{i \in \{\s, \e\}} \left( \mathcal{E}_{c_0}(g^i) - \mathcal{E}^B_{c_0}(\mathcal{G}^i) + \mathcal{U}_{c_0}(\mathcal{G}^i) \right) \leq \epsilon_1,
\end{align*}
then the true deferral excess risk satisfies
\begin{equation*}
    \begin{aligned}
            & \mathcal{E}_{\ell_{\text{def}}}(g, r) - \mathcal{E}^B_{\ell_{\text{def}}}(\mathcal{G}, \mathcal{R}) + \mathcal{U}_{\ell_{\text{def}}}(\mathcal{G}, \mathcal{R}) \leq \\
&\,\,\, \epsilon_1 + \left( \sum_{i \in \{\s, \e\}} \mspace{-30mu}\|\boldsymbol{\overline{\tau}^i}\|_1 \right) \Gamma^\nu\left( \frac{\epsilon_0}{\sum_{i \in \{\s, \e\}} \|\boldsymbol{\overline{\tau}^i}\|_1} \right).
    \end{aligned}
\end{equation*}
Importantly, in the realizable regime where both function classes are unrestricted (i.e., \( \mathcal{R} = \mathcal{R}_{\text{all}} \) and \( \mathcal{G} = \mathcal{G}_{\text{all}} \)), we recover exact minimization of both surrogate and true risks.

\begin{restatable}[Bayes-consistency]{corollary}{bayes}
\label{cor:bayes}
Suppose the conditions of Theorem~\ref{theo:consistency} hold and \( \mathcal{R} = \mathcal{R}_{\text{all}}, \mathcal{G} = \mathcal{G}_{\text{all}} \). Then the \sdl{} is Bayes-consistent: for any sequence \( (g_k, r_k) \in \mathcal{G}_{\text{all}} \times \mathcal{R}_{\text{all}} \) with
\[
\mathcal{E}_{\Phi_{\text{def}}^\nu}(r_k) - \mathcal{E}_{\Phi_{\text{def}}^\nu}^\ast(\mathcal{R}_{\text{all}}) \xrightarrow{k \to \infty} 0,
\]
we have $\mathcal{E}_{\ell_{\text{def}}}(g_k, r_k) - \mathcal{E}_{\ell_{\text{def}}}^B(\mathcal{G}_{\text{all}}, \mathcal{R}_{\text{all}}) \xrightarrow{k \to \infty} 0.$
\end{restatable}

\noindent This result confirms that minimizing our surrogate training objective leads to asymptotically optimal deferral policies under general conditions.

\paragraph{Single expert allocation.} 
When both the \Start{} and \End{} span predictions must be delegated to the same agent, we define a unified deferral policy \( \pi^\ast(x) \) that minimizes the total Bayes risk across tasks. This yields the following optimal allocation rule, proved in the appendix:

\begin{restatable}[Bayes-Optimal Deferral Policy]{lemma}{bayesalloc}
\label{optimal_rule}
Given \( x \in \mathcal{X} \), the Bayes-optimal policy assigns the query to the agent \( j \in \mathcal{A} \) minimizing the expected cumulative risk:
\[
\pi^\ast(x) = \argmin_{j \in \mathcal{A}} \sum_{i \in \{\s, \e\}} \eta_j^i(x),
\]
where $\eta_j^i(x)=\mb{E}_{Y^i\sim\mc{D}^i(\cdot|X=x)}[c_j(x,Y^i)]$ denotes the expected cost of assigning task \( i \) to agent \( j \).
\end{restatable}

In practice, we approximate this policy by learning per-task rejector scores \( \hat{r}^{ i}(x, j) \) that estimate \( \eta_j^i(x) \). This gives rise to the learned allocation rule:

\begin{restatable}[Learned allocation policy]{corollary}{learn}
\label{cor:learn}
Assume the per-task surrogate $\Phi_{01}^{\nu}$ is strictly proper
(Lemma~\ref{lemma:surrogate}), so it admits a strictly decreasing
link $\psi\colon[0,1]\!\to\!\mathbb R$.
Suppose further that the excess surrogate risk of the learned
score function $\hat r$ satisfies
\[
  \mathcal E_{\Phi^\nu_{\mathrm{def}}}(\hat r)
  \;-\;
  \mathcal E^{\ast}_{\Phi^\nu_{\mathrm{def}}}(\mathcal R_{\text{all}})
  \;\xrightarrow[n\to\infty]{} 0 .
\]
Then, as the training-sample size \(n\) grows,
\[
  \Pr_{X\sim\mathcal D}\bigl[
     \hat\pi(X)\neq\pi^{\ast}(X)
  \bigr]
  \;\xrightarrow{n\to\infty} 0 ,
\]
so the learned allocation rule
\[
  \hat\pi(x)
  = \arg\max_{j\in\mathcal A}
      \;\sum_{i\in\{\text{start},\text{end}\}}
        \hat r^{\,i}(x,j),
\]
converges in probability to the Bayes-optimal deferral policy
of Lemma~\ref{optimal_rule}.
\end{restatable}

A proof is given in the appendix. Under the consistency guarantees established in Theorem~\ref{theo:consistency}, this converges to the Bayes-optimal allocation \( \pi^\ast(x) \), ensuring reliable and cost-sensitive deferral in the single-agent allocation setting. The query is allocated to an agent according to Algorithm~\ref{algo_inference1}.

\vspace{\baselineskip}
\begin{algorithm}[ht]
\caption{Inference with Corollary \ref{cor:learn}}\label{algo_inference1}
\begin{algorithmic}[1]
   \STATE \textbf{Input:} Query \( x = (q, c)\) with question \( q \) and context \( c \)
   \STATE \textbf{Rejector Evaluation:} Compute scores \( \hat{r}(x) = (\hat{r}^{\s}(x), \hat{r}^{\e}(x)) \)
   \STATE \textbf{Agent Allocation:} Select agent via the learned policy:
   \vspace{-1.5em}
   \[
   \hat{\pi}(x) = \argmax_{j \in \mathcal{A}} \sum_{i \in \{\s, \e\}} \hat{r}^{i}(x, j)
   \]
       \vspace{-2em} 
       \STATE \textbf{Prediction:} 
   \begin{itemize}
       \item \textbf{If} \( \hat{\pi}(x) = 0 \), return prediction \( g(x) \)
       \item \textbf{Else} return expert prediction \( m_{\hat{\pi}(x)}(x) \)
   \end{itemize}
\end{algorithmic}
\end{algorithm}
\vspace{\baselineskip}



\section{EVALUATION}

\subsection{Motivation}
We evaluate L2D for balancing predictive accuracy and computational efficiency in EQA, focusing on realistic deployment scenarios with lightweight on-device models and stronger remote experts. Our goal is not to surpass state-of-the-art accuracy, but to show that L2D effectively trades off answer quality against computational cost, reliably identifying when deferral is warranted and minimizing reliance on expensive models.

\subsection{Setting}
We test L2D on three standard EQA benchmarks: SQuADv1 \citep{rajpurkar2016squad}, SQuADv2 \citep{rajpurkar2018knowdontknowunanswerable}, and TriviaQA \citep{joshi2017triviaqa}. The training and inference procedures are given in Algorithms~\ref{algo_training1} and~\ref{algo_inference1}.

\paragraph{Agents.}
We evaluate our L2D framework in a deployment-motivated setting with one lightweight general-purpose model and two stronger EQA experts. As the on-device model, we select \textsc{LLaMA-3.2-1B}~\citep{touvron2023llamaopenefficientfoundation}, which offers broad task coverage while remaining computationally viable for resource-constrained environments. For simplicity, we use the base weights without fine-tuning. The expert models, \( M_1 \) and \( M_2 \), are \textsc{ALBERT-Base} and \textsc{ALBERT-XXL}, respectively~\citep{lan2020albertlitebertselfsupervised}. These models are well suited to EQA and achieve high span-selection accuracy, but they do not offer LLaMA's general-purpose capabilities, making them less suitable as the on-device default model. To capture the disparity in inference cost between \( M_1 \) and \( M_2 \), we define a cost ratio \( R = \frac{\text{GFLOPs}(M_2)}{20 \cdot \text{GFLOPs}(M_1)} \) and apply an expert penalty \(\beta_2 = R \beta_1\) to discourage excessive reliance on the more computationally intensive model.

To avoid tying the analysis to deployment-specific factors such as infrastructure or communication stack, we use GFLOPs as a simple proxy when defining this ratio. In practical deployments, users could instead calibrate the \(\beta\) costs using quantities such as network latency or monetary cost.
  

\paragraph{Rejector.}
To enable cost-aware query allocation across agents, we employ a compact model designed for small-device deployment (Figure~\ref{fig:rejector_architecture}). Specifically, we instantiate the rejector using the TinyBERT architecture~\citep{devlin2018bert}, which contains only 4.39M parameters, just 0.35\% the size of the LLaMA-3.2-1B model, making it well suited to low-compute environments. The rejector is trained using the surrogate deferral loss (SDL) introduced in Lemma~\ref{lemma:surrogate}. We adopt the multiclass formulation \(\Phi_{01}^\nu\) with \(\nu = 1\), corresponding to the standard log-softmax loss~\citep{mao2023crossentropylossfunctionstheoretical}.

\paragraph{Benchmark:} We benchmark against vote-based ensembles \citep{10.1023/A:1018054314350, trad2024ensemblenotassessingmajority}, which most closely resemble our setting because they combine multiple models under direct allocation. However, ensembles query all models in parallel and therefore operate under a different computational regime. We also benchmark against a larger model from the Llama-3 family, Llama-3-8B \citep{grattafiori2024llama3herdmodels}, to assess whether our framework can match or exceed the performance of a substantially larger model while retaining the deployment advantages of the 1B variant. When prompting both Llama-3.2-1B and Llama-3-8B, we use few-shot demonstrations \citep{brown2020languagemodelsfewshotlearners}, listed in the appendix.

Additionally, we benchmark our approach against the family of single-expert routers introduced in \citep{ding2024hybridllmcostefficientqualityaware}: the probabilistic router $r_{\text{prob}}$, the deterministic router $r_{\text{deter}}$, and the relaxed router with transformation $r_{\text{trans}}$. We use ALBERT-Base as the model and ALBERT-XXL as the expert. To adapt this framework to EQA, we use $c_j(x,y^i)$ as the quality function in place of BART \citep{yuan2021bartscoreevaluatinggeneratedtext}.


\paragraph{Metrics:} We measure EQA performance using Exact Match (EM). We emphasize that, although the specialist models are stronger, they are not suitable candidates for $g$. We also report the GFLOPs/EM ratio, which measures computational cost per unit of performance, and allocation ratios, which quantify the fraction of queries deferred to experts and therefore reflect how $r$ accounts for cost. Finally, we report true-positive and false-positive rates (TPR/FPR). A true positive occurs when the main model is incorrect and the query is correctly deferred to an accurate expert. A false positive occurs when the query is deferred to an incorrect expert even though the main model is correct.

\subsection{Results}

\paragraph{Sanity Check Against Random Allocation.}
We begin with a comparison against random allocation as a sanity check. Figure~\ref{fig:tdl_random_all} compares the true deferral loss of L2D with that of a random allocation policy across all benchmark datasets. This is the most direct way to verify that the rejector is learning a non-trivial allocation rule rather than merely redistributing queries across agents.

\begin{figure}[t]
\centering
\begin{subfigure}[t]{0.48\linewidth}
\centering
\includegraphics[width=\linewidth]{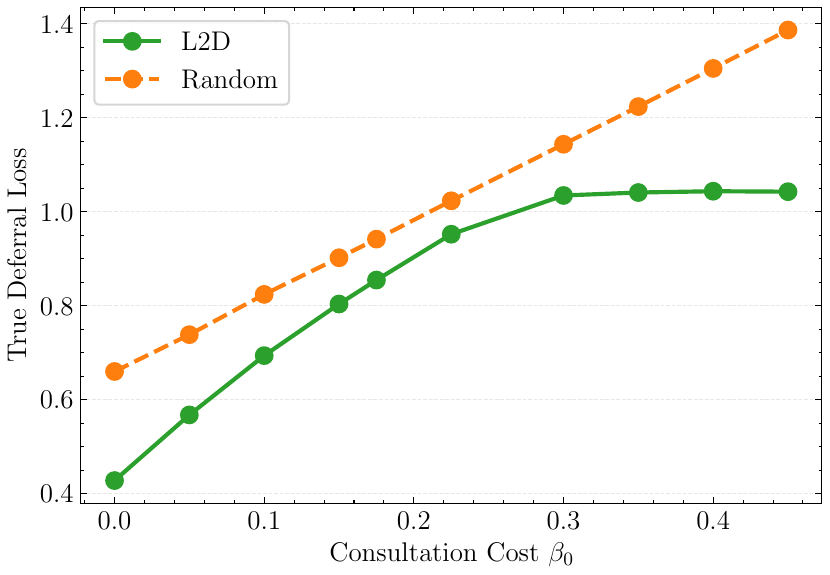}
\caption{SQuADv1}
\end{subfigure}
\hfill
\begin{subfigure}[t]{0.48\linewidth}
\centering
\includegraphics[width=\linewidth]{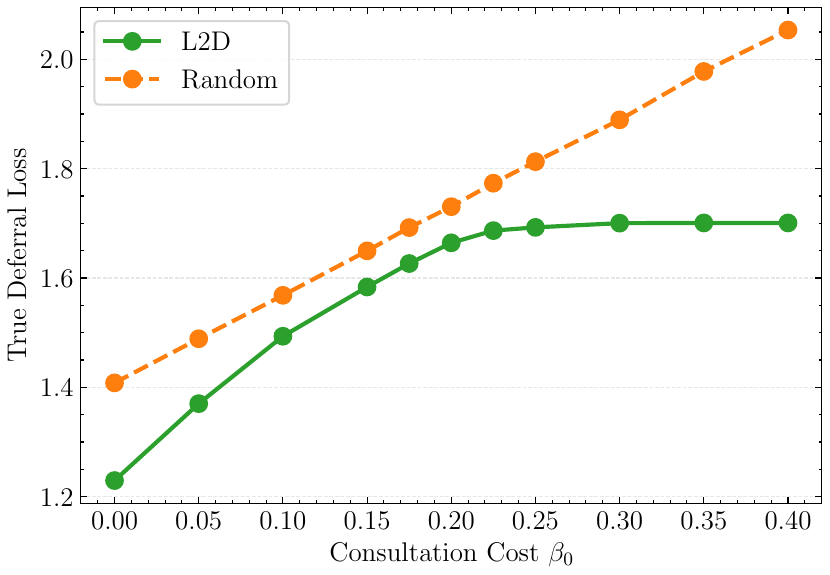}
\caption{SQuADv2}
\end{subfigure}

\vspace{0.2em}

\makebox[\linewidth][c]{%
\begin{subfigure}[t]{0.48\linewidth}
\centering
\includegraphics[width=\linewidth]{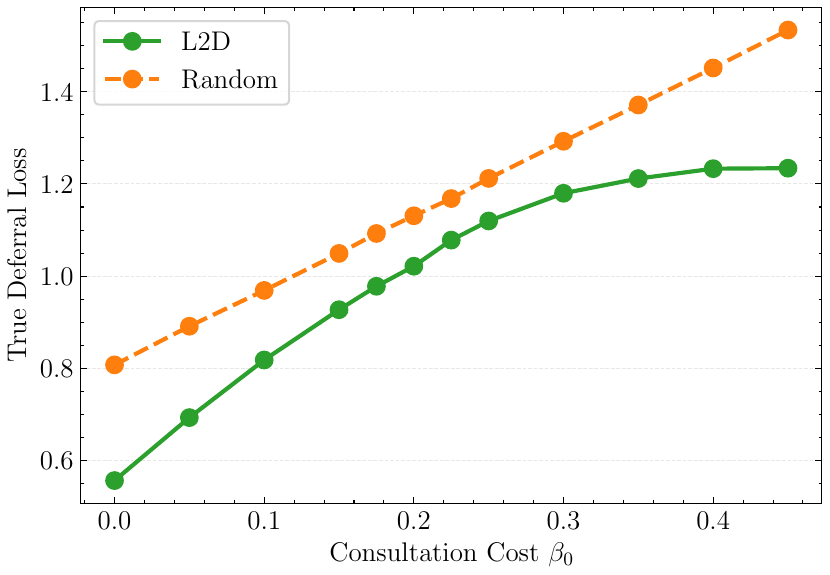}
\caption{TriviaQA}
\end{subfigure}}
\caption{\textbf{TDL against random allocation as consultation cost \(\beta_0\) varies.}}
\label{fig:tdl_random_all}
\end{figure}

For all values of \(\beta_0\), L2D achieves substantially lower TDL, demonstrating that it defers selectively rather than arbitrarily. Because TDL penalizes both unnecessary expert usage and incorrect on-device predictions, the consistent gap over random allocation indicates that the learned policy captures the intended cost-quality trade-off.

\paragraph{Cost-Aware Allocation Behavior.}
We next examine how the learned policy responds to the consultation cost. Across all three datasets, increasing the consultation penalty systematically reduces reliance on expensive experts and shifts traffic toward cheaper agents. In particular, increasing $\beta_1$ moves allocation toward the cheaper ALBERT-Base expert, while extreme values of \(\beta_1\) can suppress allocation to either \(g\) or \(M_2\). This behavior indicates that the rejector is not merely optimizing accuracy, but is explicitly responding to the specified cost structure. Additional plots of normalized cost and expert query rate are provided in the appendix.

\paragraph{Accuracy-Efficiency Trade-Offs.}
The next question is whether these cost-aware routing decisions preserve answer quality. Figure~\ref{fig:accuracy_compute_tradeoffs} reports the evolution of exact match and GFLOPs/EM as the consultation cost varies. These plots make the performance-efficiency trade-off explicit and clarify where the learned policy remains competitive while reducing computation.

\begin{figure*}[t]
\centering
\setlength{\abovecaptionskip}{2pt}
\setlength{\belowcaptionskip}{0pt}

\begin{subfigure}[t]{\textwidth}
\centering
\includegraphics[width=0.315\textwidth]{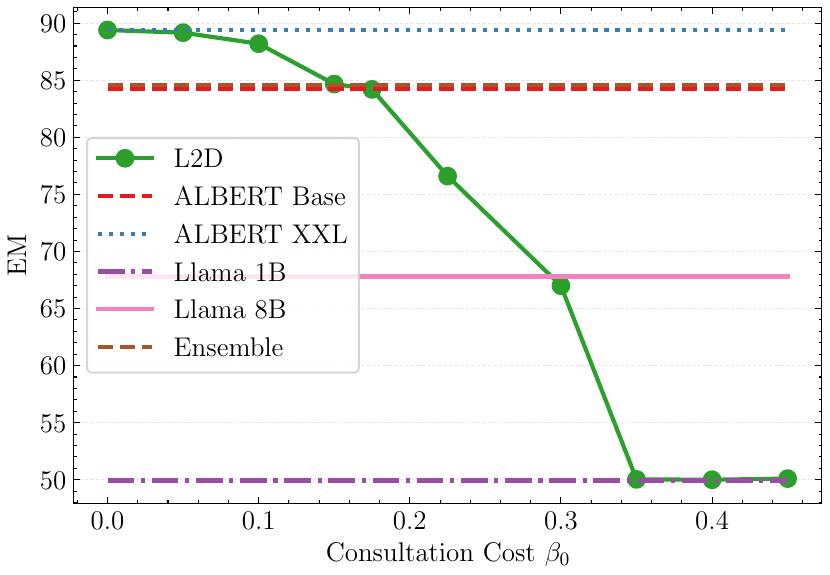}
\includegraphics[width=0.315\textwidth]{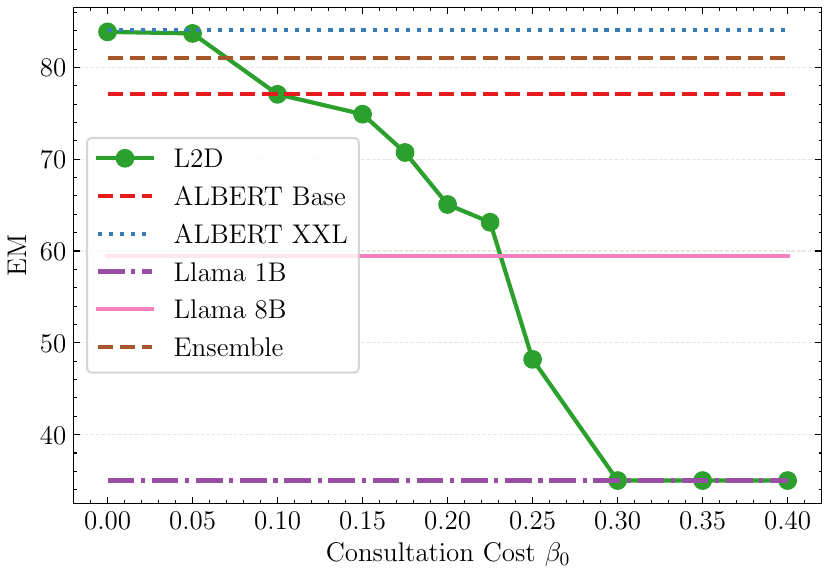}
\includegraphics[width=0.315\textwidth]{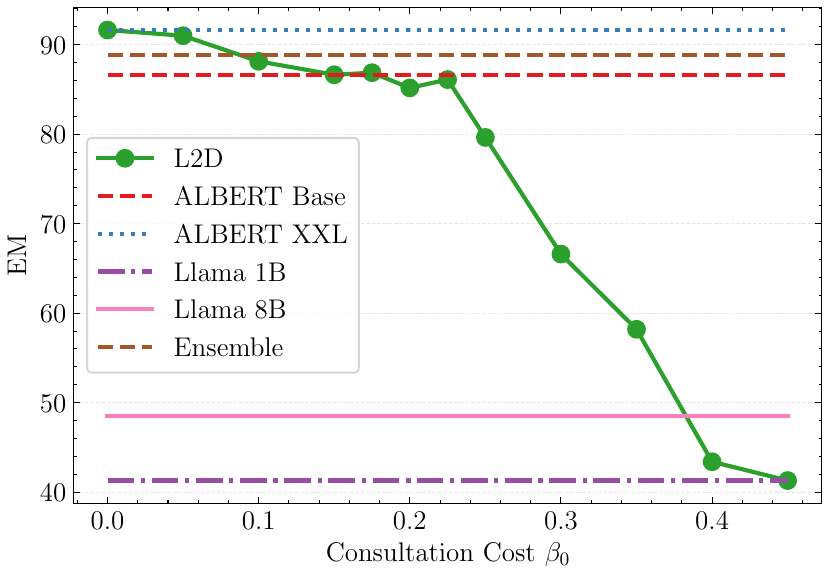}
\caption{Exact match (EM) as consultation cost \(\beta_0\) varies.}
\end{subfigure}

\vspace{0.2em}

\begin{subfigure}[t]{\textwidth}
\centering
\includegraphics[width=0.315\textwidth]{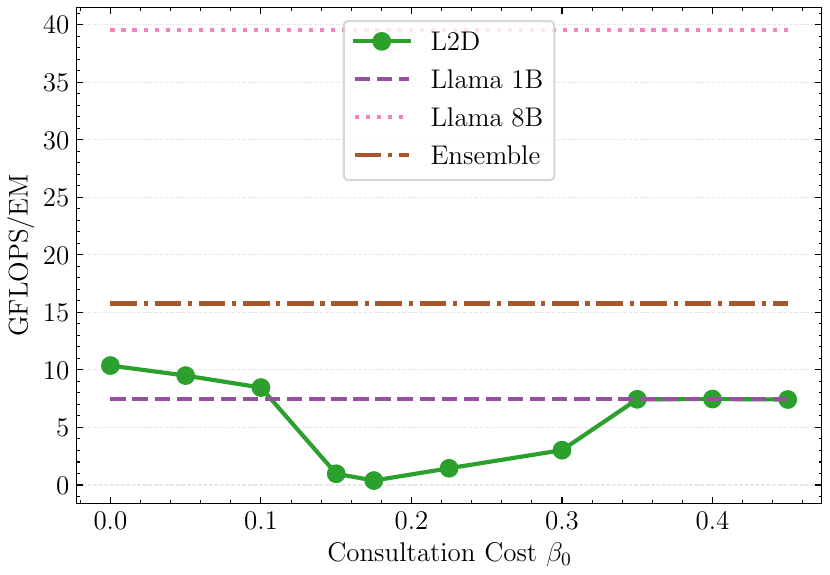}
\includegraphics[width=0.315\textwidth]{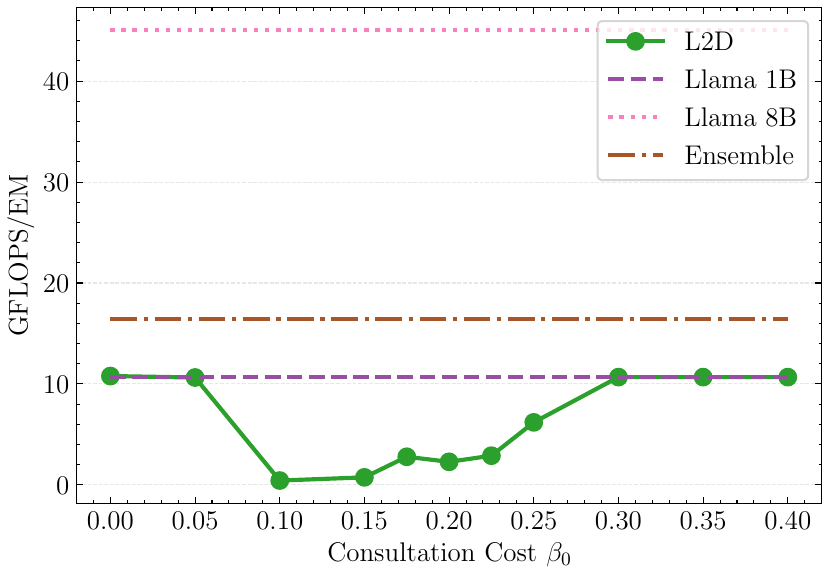}
\includegraphics[width=0.315\textwidth]{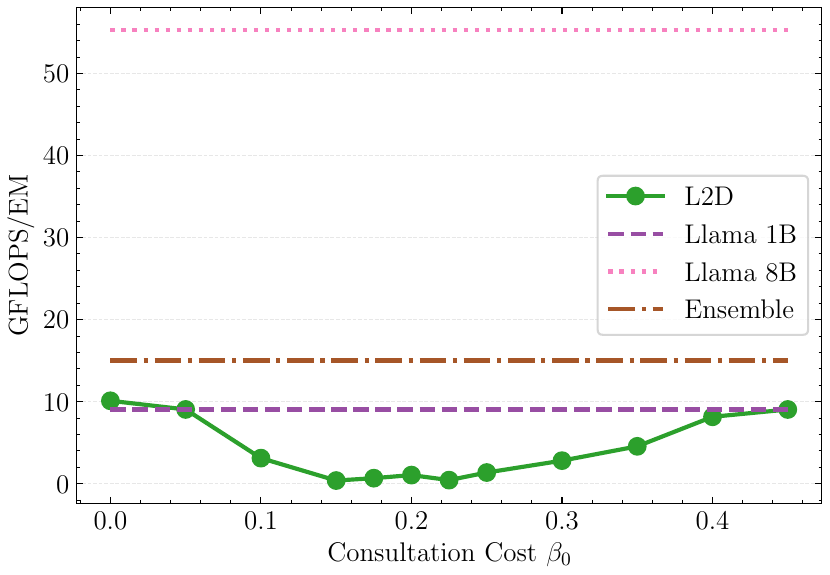}
\caption{GFLOPs/EM as consultation cost \(\beta_0\) varies.}
\end{subfigure}

\captionsetup{skip=0.8\baselineskip}
\caption{
\textbf{Accuracy-efficiency trade-offs across datasets.}
Columns correspond to SQuADv1, SQuADv2, and TriviaQA from left to right. The top row reports exact match, and the bottom row reports computational cost per unit of performance.
}
\label{fig:accuracy_compute_tradeoffs}
\end{figure*}

As \(\beta_0\) increases, the policy is encouraged to use cheaper agents more aggressively, which leads to a gradual reduction in EM, as shown in the top row of Figure~\ref{fig:accuracy_compute_tradeoffs}. This behavior is expected: larger consultation penalties make aggressive expert usage less attractive. The key point is that the resulting loss in exact match is controlled rather than abrupt, which indicates that the rejector is making meaningful trade-offs instead of collapsing to a single fixed agent.

The bottom row of Figure~\ref{fig:accuracy_compute_tradeoffs} shows that, across all benchmark datasets, our approach uses the fewest computational resources per unit of EM. This highlights the cost-performance efficiency of the proposed method. We also obtain at least a 4x efficiency improvement relative to naively running the larger, more expensive Llama 8B on the edge device while maintaining comparable performance. This underscores the practical value of selective expert involvement.

\paragraph{Single-Expert Router Comparison.}
Finally, we compare our framework with single-expert routing baselines adapted from \citet{ding2024hybridllmcostefficientqualityaware}. This comparison isolates the effect of the deferral objective from the multi-expert setting and tests whether our loss remains beneficial in the simpler two-agent case.

\begin{table}[h]
\centering 
\caption{Single-Expert Setting, Exact Match on SQuADv1, L2D against Query Routers \citep{ding2024hybridllmcostefficientqualityaware}}
\begin{tabular}{@{}cccccc@{}}
\toprule
Cost $\beta_0$ & Our & $r_{\text{prob}}$ & $r_{\text{deter}}$ & $r_{\text{trans}}$ \\
\midrule
$0.0$ &$ 84.2$ & $ 84.2$ & $ 84.2$ & $ 84.2$    \\
\bottomrule
\end{tabular}
\label{tab:em}
\end{table}

\begin{table}[h]
\centering
\caption{Single-Expert Setting, Exact Match / Cost on SQuADv1, L2D against Query Routers \citep{ding2024hybridllmcostefficientqualityaware}}
\begin{tabular}{@{}cccccc@{}}
\toprule
Cost $\beta_0$ & Our & $r_{\text{prob}}$ & $r_{\text{deter}}$ & $r_{\text{trans}}$  \\
\midrule
$0.25$ &$12.16$ & $ 3.36 $ & $ 3.37 $ & $ 3.36$ \\
\midrule
$0.5$ &$ 3.04$ & $ 1.68 $ & $ 1.68$ & $ 1.68$ \\
\bottomrule
\end{tabular}
\label{tab:empercost}
\end{table}

Table~\ref{tab:em} shows that, in the zero-cost single-expert setting, our method matches the performance of existing query routers. Table~\ref{tab:empercost} then shows that, once expert consultation incurs nonzero cost, our method outperforms the router baselines. This suggests that the proposed objective is better aligned with discontinuous cost structures of the form $c_j(x,y^i)$ and therefore remains effective even when the routing problem becomes explicitly cost-sensitive.

\section{CONCLUSION}

We introduced a Learning-to-Defer framework for EQA that dynamically allocates queries to the most suitable agent while balancing accuracy and computational cost. The framework is supported by theoretical guarantees and is intended for resource-constrained deployment settings. Empirical evaluations on SQuADv1, SQuADv2, and TriviaQA show improved reliability-efficiency trade-offs relative to larger LLMs, ensembles, and routing baselines.

\section*{ACKNOWLEDGMENTS}
This research was supported by the National Research Foundation, Singapore, under its AI Singapore Programme (AISG Award No.\ AISG2-PhD-2023-01-041-J) and by A*STAR, and forms part of the DesCartes programme, which is supported by the National Research Foundation, Prime Minister's Office, Singapore, under its Campus for Research Excellence and Technological Enterprise (CREATE) programme.

\bibliographystyle{apalike}
\bibliography{aistats2026}

\newpage
\section*{Checklist}

\begin{enumerate}

  \item For all models and algorithms presented, check if you include:
  \begin{enumerate}
    \item A clear description of the mathematical setting, assumptions, algorithm, and/or model. [Yes]
    \item An analysis of the properties and complexity (time, space, sample size) of any algorithm. [Yes]
    \item (Optional) Anonymized source code, with specification of all dependencies, including external libraries. [Yes]
  \end{enumerate}

  \item For any theoretical claim, check if you include:
  \begin{enumerate}
    \item Statements of the full set of assumptions of all theoretical results. [Yes]
    \item Complete proofs of all theoretical results. [Yes]
    \item Clear explanations of any assumptions. [Yes]     
  \end{enumerate}

  \item For all figures and tables that present empirical results, check if you include:
  \begin{enumerate}
    \item The code, data, and instructions needed to reproduce the main experimental results (either in the supplemental material or as a URL). [Yes]
    \item All the training details (e.g., data splits, hyperparameters, how they were chosen). [Yes]
    \item A clear definition of the specific measure or statistics and error bars (e.g., with respect to the random seed after running experiments multiple times). [Yes]
    \item A description of the computing infrastructure used. (e.g., type of GPUs, internal cluster, or cloud provider). [Yes]
  \end{enumerate}

  \item If you are using existing assets (e.g., code, data, models) or curating/releasing new assets, check if you include:
  \begin{enumerate}
    \item Citations of the creator If your work uses existing assets. [Yes] 
    \item The license information of the assets, if applicable. [Yes]
    \item New assets either in the supplemental material or as a URL, if applicable. [Yes]
    \item Information about consent from data providers/curators. [Not Applicable]
    \item Discussion of sensible content if applicable, e.g., personally identifiable information or offensive content. [Not Applicable]
  \end{enumerate}

  \item If you used crowdsourcing or conducted research with human subjects, check if you include:
  \begin{enumerate}
    \item The full text of instructions given to participants and screenshots. [Not Applicable]
    \item Descriptions of potential participant risks, with links to Institutional Review Board (IRB) approvals if applicable. [Not Applicable]
    \item The estimated hourly wage paid to participants and the total amount spent on participant compensation. [Not Applicable]
  \end{enumerate}

\end{enumerate}

\clearpage
\appendix

\clearpage
\onecolumn
\appendix 

\section{APPENDIX}
\subsection{Current Approaches} 

\begin{figure}[H] 
    \centering
    \begin{tikzpicture}[scale=0.8]
        \draw[fill=blue!20] (-2.2,0.6) rectangle (-1,1.1) node[pos=.5, scale=0.7] {LLM$_1$};
        \draw[fill=red!20] (-0.2,0.6) rectangle (1,1.1) node[pos=.5, scale=0.7] {LLM$_2$};
        \draw[fill=green!20] (1.8,0.6) rectangle (3,1.1) node[pos=.5, scale=0.7] {LLM$_3$};
        \draw[thick, ->] (-1,0.85) -- (-0.2,0.85); 
        \draw[thick, ->] (1,0.85) -- (1.8,0.85); 
    \end{tikzpicture}
    \hspace{1cm}  
    \begin{tikzpicture}[scale=1.1]
        \draw[fill=brown!20] (1,0) rectangle (2.2,0.5) node[pos=.5, scale=0.7] {Router};
        \draw[fill=blue!20] (0,1.2) rectangle (1.2,1.7) node[pos=.5, scale=0.7] {LLM$_{weak}$};
        \draw[fill=red!20] (2,1.2) rectangle (3.2,1.7) node[pos=.5, scale=0.7] {LLM$_{strong}$};
        \draw[thick, ->] (1.6,0.5) -- (0.6,1.2);
        \draw[thick, ->] (1.6,0.5) -- (2.6,1.2);
    \end{tikzpicture}
    \hspace{1cm}  
    \begin{tikzpicture}[scale=0.8]
        \draw[fill=brown!20] (1.25,0) rectangle (2.45,0.5) node[pos=.5, scale=0.7] {Router};
        \draw[fill=blue!20] (-0.5,1.2) rectangle (0.7,1.7) node[pos=.5, scale=0.7] {LLM};
        \draw[fill=red!20] (1.25,1.2) rectangle (2.45,1.7) node[pos=.5, scale=0.7] {M$_1$};
        \draw[fill=green!20] (3,1.2) rectangle (4.2,1.7) node[pos=.5, scale=0.7] {M$_2$};
        \draw[thick, ->] (1.85,0.5) -- (0.1,1.2);
        \draw[thick, ->] (1.85,0.5) -- (1.85,1.2);
        \draw[thick, ->] (1.85,0.5) -- (3.6,1.2);
    \end{tikzpicture}
    \captionsetup{skip=2\baselineskip}
    \caption{From left to right: model cascades, query routing, and Learning-to-Defer (ours). Our framework retains the multi-model nature of cascades while allowing direct allocation, as in query routing.}
    \label{currentapproaches}
\end{figure}

\subsection{Proof Lemma \ref{lemma:rejector}}\label{proof:rejector}

\rejector* 
\begin{proof}
For \(i\in\{\text{start},\text{end}\}\) and any input \(x\in\mathcal X\), define
\[
\eta_0^{i}(x)
  \;=\;
  \Pr_{(X,Y^{i})\sim\mathcal D^{i}}
     \!\bigl[g^{i}(x)\neq Y^{i}\mid X=x\bigr],
\]

\[
\eta_{j}^{i}(x)
  \;=\;
  \alpha_{j}\,
  \Pr_{(X,Y^{i})\sim\mathcal D^{i}}
     \!\bigl[m_{j}^{i}(x)\neq Y^{i}\mid X=x\bigr]
  \;+\;
  \beta_{j},
  \qquad j=1,\dots,J,
\]
where \(\alpha_{j}\ge0\) and \(\beta_{j}\ge0\) are the per-expert error
penalty and fixed consultation cost, respectively. \\

\noindent We analyze the conditional risk associated with the \tdl{} for a fixed task \( i \in \{\s, \e\} \). For a given input \( x \in \mathcal{X} \), this risk is defined as the expected cost incurred when using a rejector \( r^i \in \mathcal{R}^i \) to select an agent:
\begin{align*}
\mathcal C_{\ell_{\text{def}}}^{i}(g^{i},r^{i},x)
  &= \mathbb E_{Y^{i}\sim\mathcal D^{i}(\,\cdot\mid X=x)}
     \Bigl[
       \sum_{j=0}^{J}
         c_{j}\bigl(x,Y^{i}\bigr)\,
         \mathbf 1\!\{r^{i}(x)=j\}
     \Bigr] \\[2pt]
  &= \eta_{0}^{i}(x)\,\mathbf 1\!\{r^{i}(x)=0\}
     \;+\;
     \sum_{j=1}^{J}
       \eta_{j}^{i}(x)\,
       \mathbf 1\!\{r^{i}(x)=j\}.
\end{align*}
To determine the optimal rejection strategy, we minimize this conditional risk over all possible choices of \( g^i \in \mathcal{G}^i \) and \( r^i \in \mathcal{R}^i \):
\begin{align*}
    \inf_{r^i \in \mathcal{R}^i,\, g^i \in \mathcal{G}^i} 
    \mathcal{C}_{\ell_{\text{def}}}^i(g^i, r^i, x)
    &= \min \left\{
        \inf_{g^i \in \mathcal{G}^i} \eta_0^i(x),
        \min_{j \in [J]} \eta_j^i(x)
    \right\}.
\end{align*}

This implies the structure of the Bayes-optimal rejector \( r^{B,i} \), which assigns the query to the agent with minimal conditional risk:
\[
r^{B,i}(x) = 
\begin{cases}
0, & \text{if } \displaystyle \inf_{g^i \in \mathcal{G}^i} \eta_0^i(x) 
\leq \min_{j \in [J]} \eta_j^i(x), \\
\argmin_{j \in [J]} \eta_j^i(x), & \text{otherwise}.
\end{cases}
\]
Thus, the Bayes-rejector minimizes the expected deferral loss by comparing the cost-adjusted risks of the main model and the experts, selecting the most reliable agent accordingly.
\end{proof}

\subsection{Proof Lemma \ref{lemma:surrogate}} 
\label{proof_surrogate}

\surrogate*

\begin{proof}
We consider a unified agent space \( \mathcal{A} = \{0, 1, \dots, J\} \), where index \( 0 \) denotes the main model and indices \( j \in [J] \) denote experts. For each query–label pair \( (x,y^i) \), define the agent-specific cost as \( c_j(x,y^i) \geq 0 \) for all \( j \in \mathcal{A} \). Let the total consultation cost be:
\[
C_{\text{tot}}^i(x,y^i) = \sum_{j \in \mathcal{A}} c_j(x,y^i).
\]
Define the deferral complement cost (excluding agent \( j \)) as
\[
\tau_j^i(x,y^i) = \sum_{\substack{q \in \mathcal{A} \\ q \neq j}} c_q(x,y^i) = C_{\text{tot}}^i(x,y^i) - c_j(x,y^i).
\]

Now, for a selector \( r^i: \mathcal{X} \to \mathcal{A} \), the deferral loss component is
\[
\ell_{\text{def}}^i(r^i(x), y^i) = c_{r^i(x)}(x,y^i).
\]
Because \( \mathcal{A} \setminus \{r^i(x)\} \) is the complement of the selected agent, we can equivalently write:
\begin{align}
\ell_{\text{def}}^i(r^i(x), y^i)
&= C_{\text{tot}}^i(x,y^i) - \sum_{j \neq r^i(x)} c_j(x,y^i) \nonumber \\
&= \sum_{j \in \mathcal{A}} \tau_j^i(x,y^i)  \mathbf{1}\{r^i(x) \neq j\} - (J-1)  C_{\text{tot}}^i(x,y^i). \label{eq:deferral_top1}
\end{align}

Now aggregate the deferral loss over both span components \( i \in \{\s, \e\} \) to obtain the total deferral loss:
\[
\sum_{i \in \{\s, \e\}} \ell_{\text{def}}^i(r^i(x), y^i)
= \sum_{i \in \{\s, \e\}} \left[
\sum_{j \in \mathcal{A}} \tau_j^i(x,y^i)  \mathbf{1}\{r^i(x) \neq j\}
- (J - 1)  C_{\text{tot}}^i(x,y^i)
\right].
\]

Let \( \Phi_{01}^{u}(r^i, x, j) \) be a surrogate loss that upper bounds the 0-1 indicator:
\[
\mathbf{1}\{r^i(x) \neq j\} \leq \Phi_{01}^{u}(r^i, x, j), \quad \forall j \in \mathcal{A}.
\]
This holds for the cross-entropy surrogate and other upper-bounding families \citep{mao2023crossentropylossfunctionstheoretical}. Since each \( \tau_j^i(x,y^i) \geq 0 \), the total loss satisfies:
\[
\sum_{i \in \{\s, \e\}} \ell_{\text{def}}^i(r^i(x), y^i)
\leq \sum_{i \in \{\s, \e\}} \left[
\sum_{j \in \mathcal{A}} \tau_j^i(x,y^i)  \Phi_{01}^{u}(r^i, x, j)
- (J - 1)  C_{\text{tot}}^i(x,y^i)
\right].
\]

Furthermore, as the term $(J - 1)  C_{\text{tot}}^i(x,y^i)$ 
does not depend on $r$, we can formalize the following surrogate loss
\begin{equation}
    \Phi(r,x,y) =  \sum_{i \in \{\s, \e\}} 
\sum_{j \in \mathcal{A}} \tau_j^i(x,y^i)  \Phi_{01}^{u}(r^i, x, j)
\end{equation}
\end{proof}

\subsection{Proof Theorem \ref{theo:consistency}}\label{proof:consistency}
\theoconsistency*
\begin{proof}
The proof of Theorem~\ref{theo:consistency} uses the following lemma, which states the relevant consistency property for a general distribution.

\begin{restatable}[$\mc{R}^i$-consistency bound]{lemma}{cons}\label{proof:cons}
Fix an input $x\in\mc{X}$ and any distribution $\mc{D}$. Suppose there exists a non-decreasing, concave function $\Gamma^\nu: \mb{R}^+ \to \mb{R}^+$ for $\nu\geq 0$ such that the \( \mathcal{R}^i \)-consistency bounds hold for any distribution \( \mathcal{D} \):
\begin{equation*}
\begin{aligned}
    & \mathcal{E}_{\Phi_{01}^\nu}(r^i) - \mathcal{E}^*_{\Phi_{01}^\nu}(\mathcal{R}^i) + \mathcal{U}_{\Phi_{01}^\nu}(\mathcal{R}^i) \geq \Gamma^\nu( \mathcal{E}_{\ell_{01}}(r^i) -  \mathcal{E}^B_{\ell_{01}}(\mathcal{R}^i) + \mathcal{U}_{\ell_{01}}(\mathcal{R}^i)),
\end{aligned}
\end{equation*}
or, equivalently, for $\boldsymbol{p^i} \in \Delta^{|\mc{A}|}$,
\begin{equation*}
    \sum_{j\in\mc{A}} p_j^i1_{\{r^i(x)\neq j\}} - \inf_{r^i\in\mc{R}^i}\sum_{j\in\mc{A}} p_j^i1_{\{r^i(x)\neq j\}} \leq \Gamma^{\nu}\Big( \sum_{j\in\mc{A}} p_j^i\Phi_{01}^\nu(r^i,x,j) - \inf_{r^i\in\mc{R}^i}\sum_{j\in\mc{A}} p_j^i\Phi_{01}^\nu(r^i,x,j) \Big)
\end{equation*}
\end{restatable}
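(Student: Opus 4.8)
The plan is to show that the two displayed inequalities are equivalent reformulations of one pointwise statement, exploiting the fact that the hypothesis is assumed to hold \emph{for every} distribution $\mc{D}$. The bridge between the aggregate (expectation) form and the pointwise (simplex) form is the standard identity that rewrites an excess risk plus its minimizability gap as the expectation of a conditional calibration gap, after which concavity of $\Gamma^\nu$ (for one direction) and a Dirac specialization of $\mc{D}$ (for the other) close the equivalence.

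First I would introduce, for $\ell\in\{\ell_{01},\Phi_{01}^\nu\}$, the conditional risk $\mc{C}_{\ell}(r^i,x)=\mb{E}_{y^i\mid x}[\ell(r^i,z^i)]$ viewed over the agent label space $\mc{A}$, its pointwise-optimal value $\mc{C}^\ast_\ell(\mc{R}^i,x)=\inf_{r^i\in\mc{R}^i}\mc{C}_\ell(r^i,x)$, and the conditional regret $\Delta\mc{C}_\ell(r^i,x)=\mc{C}_\ell(r^i,x)-\mc{C}^\ast_\ell(\mc{R}^i,x)$. Expanding the minimizability gap from its definition $\mc{U}_\ell(\mc{R}^i)=\mc{E}^B_\ell(\mc{R}^i)-\mb{E}_x[\mc{C}^\ast_\ell(\mc{R}^i,x)]$, the best-in-class terms cancel and I obtain the key identity
\[
\mc{E}_\ell(r^i)-\mc{E}^B_\ell(\mc{R}^i)+\mc{U}_\ell(\mc{R}^i)=\mb{E}_x\big[\Delta\mc{C}_\ell(r^i,x)\big],
\]
valid for both $\ell_{01}$ and $\Phi_{01}^\nu$. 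Since the conditional risk of the multiclass $0$-$1$ loss over $\mc{A}$ is exactly $\mc{C}_{\ell_{01}}(r^i,x)=\sum_{j\in\mc{A}}p_j^i\,1_{\{r^i(x)\neq j\}}$ with $p^i\in\Delta^{|\mc{A}|}$ the conditional label law induced by $\mc{D}$ at $x$ (and analogously for the surrogate with $\Phi_{01}^\nu(r^i,x,j)$), the simplex inequality in the statement is precisely $\Delta\mc{C}_{\ell_{01}}(r^i,x)\le\Gamma^\nu\big(\Delta\mc{C}_{\Phi_{01}^\nu}(r^i,x)\big)$.

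For the direction pointwise $\Rightarrow$ aggregate, I would integrate the pointwise bound over $x\sim\mc{D}$ and then apply Jensen's inequality: because $\Gamma^\nu$ is concave on $\mb{R}^+$ and the conditional surrogate regret is nonnegative,
\[
\mb{E}_x\big[\Delta\mc{C}_{\ell_{01}}(r^i,x)\big]\le\mb{E}_x\big[\Gamma^\nu(\Delta\mc{C}_{\Phi_{01}^\nu}(r^i,x))\big]\le\Gamma^\nu\big(\mb{E}_x[\Delta\mc{C}_{\Phi_{01}^\nu}(r^i,x)]\big),
\]
which, through the key identity, is the aggregate $\mc{R}^i$-consistency bound. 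For the converse, I would use that the hypothesis holds for \emph{any} $\mc{D}$: fixing a point $x_0$ and a target $p^i\in\Delta^{|\mc{A}|}$ and specializing $\mc{D}$ to a Dirac mass at $x_0$ with conditional label law $p^i$ collapses both minimizability gaps to zero and reduces every excess-risk-plus-gap term to the corresponding conditional regret at $x_0$, so the aggregate bound degenerates back into the pointwise one.

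The step deserving the most care, and the main obstacle, is the bookkeeping of the minimizability gap: I must verify that $\inf_{r^i\in\mc{R}^i}\mc{C}_\ell(r^i,x)$ depends only on the scores $r^i(x,\cdot)$, so that it is measurable in $x$, that $\mb{E}_x[\mc{C}^\ast_\ell(\mc{R}^i,x)]$ is well-defined, and that this pointwise optimum coincides with the infimum appearing in the simplex form. Once the key identity is secured, the remaining content is exactly the concavity of $\Gamma^\nu$ (for Jensen) and the distribution-freeness of the assumed bound (for the Dirac specialization); the underlying pointwise inequality for the cross-entropy family $\Phi_{01}^\nu$ over $\mc{R}^i$ is the multiclass $\mc{H}$-consistency bound of \citet{Awasthi_Mao_Mohri_Zhong_2022_multi, mao2023crossentropylossfunctionstheoretical}, which furnishes the function $\Gamma^\nu$.
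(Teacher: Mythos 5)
You cannot really be compared against the paper's own proof here, because the paper does not give one: Lemma~\ref{proof:cons} is stated inside the proof of Theorem~\ref{theo:consistency} as an imported hypothesis (the multiclass consistency bounds of \citet{Awasthi_Mao_Mohri_Zhong_2022_multi} and \citet{mao2023crossentropylossfunctionstheoretical}, instantiated for $\mathcal{R}^i$ over the label space $\mathcal{A}$), the passage between its two displays is asserted only with the words ``or in a similar way,'' and only the simplex form is used afterwards. Your machinery for supplying the missing link is the right one: the identity $\mathcal{E}_\ell(r^i)-\mathcal{E}^B_\ell(\mathcal{R}^i)+\mathcal{U}_\ell(\mathcal{R}^i)=\mathbb{E}_x\big[\Delta\mathcal{C}_\ell(r^i,x)\big]$ is exactly how excess risk plus minimizability gap should be rewritten, Jensen's inequality with a concave $\Gamma^\nu$ is the correct pointwise-to-aggregate bridge, and the Dirac specialization (under which both minimizability gaps vanish) is the correct use of the ``for any distribution'' quantifier for the converse.

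The genuine gap is that the two displays are not the same statement up to these operations, because $\Gamma^\nu$ sits on opposite sides of them: the first reads $(\text{surrogate excess})\geq\Gamma^\nu(\text{true excess})$, while the second reads $(\text{true regret})\leq\Gamma^\nu(\text{surrogate regret})$. Your Jensen direction, starting from the second display, lands on the aggregate bound $\mathcal{E}_{\ell_{01}}(r^i)-\mathcal{E}^B_{\ell_{01}}(\mathcal{R}^i)+\mathcal{U}_{\ell_{01}}(\mathcal{R}^i)\leq\Gamma^\nu\big(\mathcal{E}_{\Phi_{01}^\nu}(r^i)-\mathcal{E}^*_{\Phi_{01}^\nu}(\mathcal{R}^i)+\mathcal{U}_{\Phi_{01}^\nu}(\mathcal{R}^i)\big)$, which is not the first display; your Dirac direction, starting from the first display, lands on the pointwise bound $(\text{surrogate regret})\geq\Gamma^\nu(\text{true regret})$, which is not the second. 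For a general non-decreasing concave $\Gamma^\nu$ the two orientations are inequivalent: with $\Gamma^\nu(u)=\sqrt{u}$, true regret $0.01$ and surrogate regret $0.04$ satisfy the second but violate the first. Worse, with a concave $\Gamma^\nu$ the first display does not even follow from its own pointwise version, since Jensen points the wrong way there; that direction requires a convex function. The repair you should make explicit is that the function in the first display must be the convex inverse $(\Gamma^\nu)^{-1}=\mathcal{T}^\nu$, which is precisely what the paper's closing identification $\Gamma^\nu=\mathcal{T}^{-1,\nu}$ encodes: assume $\Gamma^\nu$ strictly increasing with $\Gamma^\nu(0)=0$, prove the equivalence of the first display (written with $\mathcal{T}^\nu$) to its pointwise version via convex Jensen plus Dirac, and then pass to the second display by inversion. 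With that bookkeeping your argument closes; without it, the claimed equivalence fails on a literal reading of the statement.
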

\noindent Define, for each $j\in\mc{A}=\{0, \dots, J\}$, the cost
\[
\overline c_{j}^{\,i,\ast}(x)
\;=\;
\begin{cases}
\inf_{g^{i}\in\mathcal G^{i}}
      \mathbb E_{Y^{i}\sim\mathcal D^{i}(\,\cdot\mid X=x)}
      \bigl[c_{0}\!\bigl(x,Y^{i}\bigr)\bigr]
\\[8pt] \hfill  
   =\;
   \inf_{g^{i}\in\mathcal G^{i}}
        \Pr_{Y^{i}\sim\mathcal D^{i}(\,\cdot\mid X=x)}
        \bigl[g^{i}(x)\neq Y^{i}\bigr],
& j=0,
\\[16pt]
\mathbb E_{Y^{i}\sim\mathcal D^{i}(\,\cdot\mid X=x)}
      \bigl[c_{j}\!\bigl(x,Y^{i}\bigr)\bigr]
\\[8pt] \hfill
   =\; \alpha_{j}\,
        \Pr_{Y^{i}\sim\mathcal D^{i}(\,\cdot\mid X=x)}
           \bigl[m_{j}^{i}(x)\neq Y^{i}\bigr]
     + \beta_{j},
& j=1,\dots,J.
\end{cases}
\]

Recall the previously established result from Lemma~\ref{lemma:rejector}:
\begin{equation*}
    \begin{aligned}
       \mc{C}_{\ell_{\text{def}}}^{\ast,i}(g^i, r^i,x) & = \min\Big\{ \inf_{g^{i}\in\mathcal G^{i}}
        \Pr_{Y^{i}\sim\mathcal D^{i}(\,\cdot\mid X=x)}
        \bigl[g^{i}(x)\neq Y^{i}\bigr], \alpha_{j}\,
        \Pr_{Y^{i}\sim\mathcal D^{i}(\,\cdot\mid X=x)}
           \bigl[m_{j}^{i}(x)\neq Y^{i}\bigr]
     + \beta_{j}\Big\} \\
       & = \min_{j\in\mc{A}} \overline{c}_j^{\,i,\ast}(x)
    \end{aligned}
\end{equation*}
Therefore, we introduce the calibration gap $\Delta\mc{C}^i_{\ell_{\text{def}}}:= \mc{C}_{\ell_{\text{def}}}^{i} - \mc{C}_{\ell_{\text{def}}}^{\ast,i}$:
\begin{equation}
\begin{aligned}
    \Delta\mc{C}^i_{\ell_{\text{def}}}(r^i, g^i,x) & = \mc{C}_{\ell_{\text{def}}}^{i}(r^i, g^i,x) - \min_{j\in\mc{A}} \overline{c}_j^{\,i,\ast}(x)\\
    & = \mc{C}_{\ell_{\text{def}}}^{i}(r^i, g^i,x) - \min_{j\in\mc{A}}\overline{c}_j^i(x) + \Big(\min_{j\in\mc{A}}\overline{c}_j^i(x) - \min_{j\in\mc{A}} \overline{c}_j^{\,i,\ast}(x)\Big)  
\end{aligned}
\end{equation}

\noindent Define the first term as $A=\mc{C}_{\ell_{\text{def}}}^{i} - \min_{j\in\mc{A}}\overline{c}_j^i$ and the second term as $B= \min_{j\in\mc{A}}\overline{c}_j^i - \min_{j\in\mc{A}} \overline{c}_j^{\,i,\ast}$, so that $\Delta\mc{C}^i_{\ell_{\text{def}}}=A+B$. By Lemma~\ref{lemma:surrogate},
\begin{equation}
    \min_{j\in\mc{A}}\overline{c}_j^i(x) = \inf_{r^i\in\mc{R}}\mc{C}^i_{\ell_{def}}(r^i, g^i,x) = \inf_{r^i\in\mc{R}}\sum_{j\in\mc{A}} \overline{\tau}_j^i(x)1_{\{r^i(x)\neq j\}} 
\end{equation}
It follows by definition of the conditional risk:
\begin{equation}
    \begin{aligned}
        A = \sum_{j\in\mc{A}} \overline{\tau}_j^{i}(x) 1_{\{r^i(x)\neq j\}} - \inf_{r^i\in\mc{R}}\sum_{j\in\mc{A}} \overline{\tau}^i_j(x) 1_{\{r^i(x)\neq j\}}
    \end{aligned}
\end{equation}
We normalize the cost vector \( \boldsymbol{\overline{\tau}^i} \) using the \( \ell_1 \)-norm:

\begin{equation}
\boldsymbol{p}^i = \frac{\boldsymbol{\overline{\tau}^i}}{\|\boldsymbol{\overline{\tau}^i}\|_1} \in \Delta^{|\mathcal{A}|},
\end{equation}
where \( \|\boldsymbol{\overline{\tau}^i}\|_1 \) denotes the \( \ell_1 \)-norm, 
ensuring that \( \boldsymbol{p}^i \) lies within the probability simplex $\Delta^{|\mathcal{A}|} = \left\{ \boldsymbol{p}^i \in \mathbb{R}^{|\mathcal{A}|} \mid p^i_j \geq 0, \sum_{j} p^i_j = 1 \right\}$. Then,

\begin{equation}
    \begin{aligned}
        A & = \|\boldsymbol{\overline{\tau}^i}\|_1\Big(\sum_{j\in\mc{A}} p_j^{i} 1_{\{r^i(x)\neq j\}} - \inf_{r^i\in\mc{R}}\sum_{j\in\mc{A}} p_j^i1_{\{r^i(x)\neq j\}}\Big) \\
        & \leq \|\boldsymbol{\overline{\tau}^i}\|_1 \Gamma^\nu \Big(\sum_{j\in\mc{A}} p_j^{i} \Phi_{01}^{\nu}(r^i, x, j) - \inf_{r^i\in\mc{R}}\sum_{j\in\mc{A}} p_j^i\Phi_{01}^{\nu}(r^i, x, j)\Big) \quad \text{(using Lemma \ref{proof:cons})}\\
        & = \|\boldsymbol{\overline{\tau}^i}\|_1 \Gamma^\nu \Big( \frac{1}{\|\boldsymbol{\overline{\tau}^i}\|_1} \Big[ \sum_{j\in\mc{A}} \overline{\tau}^i_j(x) \Phi_{01}^{\nu}(r^i, x, j) - \inf_{r^i\in\mc{R}}\sum_{j\in\mc{A}} \overline{\tau}^i_j(x) \Phi_{01}^{\nu}(r^i, x, j) \Big]\Big)\\
        & = \|\boldsymbol{\overline{\tau}^i}\|_1 \Gamma^\nu \Big( \frac{\Delta\mc{C}_{\text{def}}^i(r^i, x)}{\|\boldsymbol{\overline{\tau}^i}\|_1} \Big) 
    \end{aligned}
\end{equation}
Now, we have the following relationship:
\begin{equation}\label{eq:inequality}
    \begin{aligned}
        B = \min_{j\in\mc{A}}\overline{c}_j^i(x) - \min_{j \in \mc{A}} \overline{c}^{i,\ast}_{j}(x) \leq  \mb{E}_{Y^i \sim \mc{D}(\cdot|X=x)} [c_0(x, Y^i)] - \inf_{g^i \in \mathcal{G}^i} \mb{E}_{Y^i \sim \mc{D}(\cdot|X=x)} [c_0(x, Y^i)]
    \end{aligned}
\end{equation}
Substituting the bound on $B$, we obtain:
\begin{equation}
    \begin{aligned}
        \Delta\mc{C}^i_{\ell_{\text{def}}}(r^i,g^i, x) 
    \leq \|\boldsymbol{\overline{\tau}^i}\|_1 \Gamma^\nu \Big( \frac{\Delta\mc{C}_{\text{def}}^i(r^i)}{\|\boldsymbol{\overline{\tau}^i}\|_1} \Big)  + \mb{E}_{Y^i \sim \mc{D}(\cdot|X=x)} [c_0(x, Y^i)] - \inf_{g^i \in \mathcal{G}^i} \mb{E}_{Y^i \sim \mc{D}(\cdot|X=x)} [c_0(x, Y^i)] 
    \end{aligned}
\end{equation}
Summing over \(i\), we obtain
\begin{equation}
    \begin{aligned}
         \Delta\mc{C}_{\ell_{\text{def}}}(r,g, x) \leq \sum_{i\in\{\s,\e\}} \Big[ \|\boldsymbol{\overline{\tau}^i}\|_1 \Gamma^\nu \Big( \frac{\Delta\mc{C}_{\text{def}}^i(r^i)}{\|\boldsymbol{\overline{\tau}^i}\|_1} \Big)  + \mb{E}_{Y^i \sim \mc{D}(\cdot|X=x)} [c_0(x, Y^i)] - \inf_{g^i \in \mathcal{G}^i} \mb{E}_{Y^i \sim \mc{D}(\cdot|X=x)} [c_0(x, Y^i)]\Big]
    \end{aligned}
\end{equation}
Using the fact that the function $\Gamma$ is concave and that the \Start{} and \End{} are conditionally independent given $x$:
\begin{equation}\label{eq:boxed}
\begin{aligned}
\Delta \mathcal{C}_{\ell_{\text{def}}}(r,g,x) & \leq \Biggl(\sum_{i\in\{\s, \e\}} \|\boldsymbol{\overline{\tau}^i}\|_1\Biggr)
\Gamma^\nu\Biggl( \frac{ \Delta \mathcal{C}_{\text{def}}(r) }{\sum_{i\in\{\s, \e\}} \|\boldsymbol{\overline{\tau}^i}\|_1} \Biggr) \\
& + \sum_{i\in\{\s,\e\}} \left[\mb{E}_{Y^i \sim \mc{D}(\cdot|X=x)} [c_0(x, Y^i)] - \inf_{g^i\in\mathcal{G}^i} \mb{E}_{Y^i \sim \mc{D}(\cdot|X=x)} [c_0(x, Y^i)]\right] 
\end{aligned}
\end{equation}

\noindent Taking expectation with respect to \(X\) yields the excess-risk form $\mb{E}_{X\sim\mc{D}_X}[\Delta\mc{C}_\ell]:=\mc{E}_{\ell} - \mc{E}_{\ell}^B + \mc{U}_\ell$, which gives the desired result:

\begin{equation}
\begin{aligned}
    \mathcal{E}_{\ell_{\text{def}}}(g, r ) - \mathcal{E}^B_{\ell_{\text{def}}}( \mathcal{G}, \mathcal{R}) + \mathcal{U}_{\ell_{\text{def}}}( \mathcal{G}, \mathcal{R}) & \leq  \overline{\Gamma}^\nu\left(\mathcal{E}_{\Phi^\nu_{\text{def}}}(r) - \mathcal{E}^*_{\Phi^\nu_{\text{def}}}(\mathcal{R}) + \mathcal{U}_{\Phi^\nu_{\text{def}}}(\mathcal{R})\right) \\
    & \quad + \sum_{i\in\{ \s, \e\}} \Big( \mathcal{E}_{c_0}(g^i) - \mathcal{E}^B_{c_0}(\mathcal{G}^i) + \mathcal{U}_{c_0}(\mathcal{G}^i)\Big),
\end{aligned}
\end{equation}
with $\overline{\Gamma}^\nu(u) = \Big(\sum_{i\in\{\s, \e\}} \|\boldsymbol{\overline{\tau}^i}\|_1\Big)\Gamma^\nu\Big(\frac{u}{\sum_{i\in\{\s, \e\}} \|\boldsymbol{\overline{\tau}^i}\|_1}\Big)$ and from \citet{mao2023crossentropylossfunctionstheoretical},  it follows for $\nu\geq0$ the inverse transformation $\Gamma^{\nu}(u) = \mathcal{T}^{-1, \nu}(u)$:

\[
\mathcal{T}^{\nu}(u) =
\begin{cases}
\frac{2^{1-\nu}}{1-\nu} \left[ 1 - \left( \frac{(1+u)^{\frac{2-\nu}{2}} + (1-u)^{\frac{2-\nu}{2}}}{2} \right)^{2-\nu} \right] & \nu \in [0,1) \\[12pt]
\frac{1+u}{2} \log[1+u] + \frac{1-u}{2} \log[1-u] & \nu = 1 \\[12pt]
\frac{1}{(\nu-1)n^{\nu-1}} \left[ \left( \frac{(1+u)^{\frac{2-\nu}{2}} + (1-u)^{\frac{2-\nu}{2}}}{2} \right)^{2-\nu} -1 \right] & \nu \in (1,2) \\[12pt]
\frac{1}{(\nu-1)n^{\nu-1}} u & \nu \in [2,+\infty).
\end{cases}
\]

\end{proof}

\subsection{Proof Lemma \ref{optimal_rule}} \label{label_bayes}

\bayesalloc*

\begin{proof}
Fix \(x\in\mathcal{X}\).  
Because \(\s\) and \(\e\) are conditionally independent given \(X=x\), the conditional expected
true-deferral cost when always deferring \emph{both} sub-tasks to agent
\(j\) is the \emph{additive} conditional risk
\(
\mc{C}_{j}(x)=\eta_{j}^{\s}(x)+\eta_{j}^{\e}(x).
\)
Let \(\Pi\) be the space of all measurable single-agent policies
\(\pi:\mathcal{X}\to\mathcal{A}\).
For every \(\pi\in\Pi\) the conditional expected loss at \(x\) is
\(\mc{C}_{\pi(x)}(x)\).
Hence
\(
\displaystyle\inf_{\pi\in\Pi} \mc{C}_{\pi(x)}(x)=\min_{j\in\mathcal{A}} \mc{C}_{j}(x),
\)
and the minimiser is any argument of that minimum.  
Defining \(\pi^{\ast}(x)\) exactly as the \(\argmin\) above
therefore achieves the Bayes (i.e.\ pointwise minimum) conditional risk,
which completes the proof.
\end{proof}

\subsection{Proof Corollary \ref{cor:learn}}\label{label_learn}

\learn*

\begin{proof}
After training, each task-specific rejector outputs \emph{scores}
\(
   \hat r^{\,i}(x,j)\in\mathbb{R},~i\in\{\s,\e\},~j\in\mathcal A,
\)
and the inference-time decision for the two tasks jointly is
\[
   \hat\pi(x)\;=\;\argmax_{j\in\mathcal A}
                  \hat S_j(x),\qquad
   \hat S_j(x)=\sum_{i\in\{\s,\e\}}\hat r^{\,i}(x,j).
\]
The $(\mathcal G,\mathcal R)$–consistency  
(Theorem~\ref{theo:consistency}) implies that\,%
\begin{equation}
    \sup_{j,i,x}
 \bigl|\hat r^{\,i}(x,j)-\psi(\eta^{\,i}_j(x))\bigr|\xrightarrow{p}0
\end{equation}

Because $\psi$ is strictly \emph{decreasing}, maximising
$\hat r^{\,i}(x,\cdot)$ is equivalent—w.p.\ $\to1$—to minimising
$\eta^{\,i}_j(x)$.  Write \(E_i(x)\) for this event.

On \(E(x)=E_s(x)\cap E_e(x)\) we have, for any $j_1,j_2$,
\[
   \hat S_{j_1}(x)>\hat S_{j_2}(x)
   \;\Longleftrightarrow\;
   R_{j_1}(x)<R_{j_2}(x),
\]
hence $\hat\pi(x)=\pi^{\ast}(x)$ on \(E(x)\).

By the union bound,
\(
   \Pr[\hat\pi(X)\neq\pi^{\ast}(X)]
      \le\Pr[\neg E_s(X)]+\Pr[\neg E_e(X)]
      \to0.
\)
\end{proof}

(i)  The link $\psi$ is unique for strictly proper
$\Phi_{01}^{\nu}$, hence no ambiguity arises.
(ii)  If surrogate optimisers are not uniquely defined, any
tie-breaking rule yields the same convergence result provided it is
deterministic.
(iii) The corollary formally justifies the inference
Algorithm~1 in the main paper: the single argmax over summed scores
is asymptotically optimal.

\subsection{Experiments}

\subsubsection{Few-Shot Demonstrations}\label{appendix:few_shot}
We present the few-shot demonstrations used to prompt the Llama-3 family of models. Datasets such as SQuADv2 contain questions for which no answer appears in the provided context. In these cases, we instruct the model to return no output, represented by the symbol `?`.

\begin{enumerate}
    \item \textbf{Demonstration 1:} 
    \begin{quote}
        Context:"The Eiffel Tower is located in Paris, France." \\
        Question: "Where is the Eiffel Tower?" \\
        Output: "Paris, France"
    \end{quote}
    \item \textbf{Demonstration 2:} 
    \begin{quote}
        Context: "Albert Einstein developed the theory of relativity in the early 20th century." \\
        Question: "What did Albert Einstein develop?" \\
        Output: "the theory of relativity"
    \end{quote}
    \item \textbf{Demonstration 3:} 
    \begin{quote}
        Context: "Marie Curie won the Nobel Prize in Physics in 1903 and in Chemistry in 1911." \\
        Question: "What year was Marie Curie born?"\\
        Output: "?"
    \end{quote}
    \item \textbf{Demonstration 4:}
    \begin{quote}
        Context: "The Great Wall of China was built to protect against invasions. It stretches over 13,000 miles."\\
        Question: "Who built the Great Wall of China?" \\
        Output: "?"
    \end{quote}
\end{enumerate}

\subsubsection{Agent Training and Performance Details}\label{appendix:agents_details}
We train our models on a single NVIDIA H100 GPU and report results averaged over four independent runs. We train both ALBERT-Base and ALBERT-XXL offline and will publicly release their weights. We do not train Llama-3.2-1B or Llama-3-8B from scratch; instead, we use the publicly available \emph{meta-llama} weights from HuggingFace without further training. For each dataset, we use the following hyperparameters:

\begin{table}[H]
\centering
\caption{Hyperparameters for SQuADv1, SQuADv2, and TriviaQA.}
\vspace{1\baselineskip}
\resizebox{0.8\textwidth}{!}{ 
\begin{tabular}{@{}cccccccc@{}}
\toprule
Experts & Batch Size & Epochs & Learning Rate & Warm-up & Scheduler & Max Length & Stride  \\
\toprule
ALBERT-Base & 32 & 2 & 5e-5 & 0.1 & linear & 384 & 128   \\
\midrule
ALBERT-XXL & 32 & 2 & 5e-5 & 0.1 & linear & 384 & 128  \\
\bottomrule
\end{tabular}}
\end{table}
\vspace{1\baselineskip}

We report the following performance metrics on a test set formed by subsampling 3,000 inputs from the validation set:

\begin{table}[H]
\centering
\caption{Exact Match (EM) and F1 scores for each dataset.}
\vspace{1\baselineskip}
\resizebox{0.5\textwidth}{!}{ 
\begin{tabular}{@{}cccc@{}}
\toprule
Agents & SQuADv1 & SQuADv2 & TriviaQA  \\
\toprule
ALBERT-Base & 84.20/90.63 & 77.10/79.54 & 86.63/90.86  \\
\midrule
ALBERT-XXL & 89.37/94.91 & 84.07/86.57 & 91.63/94.21  \\
\midrule
Llama-3.2-1B & 49.93/60.12 & 35.00/38.79 & 41.30/48.02\\
\midrule
Llama-3-8B & 67.80/80.22 & 59.47/66.47 & 48.47/56.66\\
\midrule
Ensemble & 84.60/90.80 & 81.06/84.19 & 88.84/91.78 \\
\bottomrule
\end{tabular}}
\label{tab:performance_agents}
\end{table}
\vspace{1\baselineskip}

\begin{table}[H]
\centering
\caption{Computational efficiency of different models. We compare the number of parameters (in millions) and computational cost (in GFLOPs) for processing a sequence of length $L=384$. The rejector is substantially lighter, with only $4.39$M parameters and $0.15$ GFLOPs, making it well suited to deployment in resource-constrained environments.}
\vspace{1\baselineskip}
\resizebox{0.7\textwidth}{!}{ 
\begin{tabular}{@{}ccccccc@{}}
\toprule
 & Llama-3.2-1B & ALBERT-Base & ALBERT-XXL &  Llama-3-8B & Rejector & Ensemble \\
\toprule
Parameters (M) & 1240 & 11.10 & 206	& 8030 & 4.39 & 1457.1 \\
\midrule
GFLOPs & 373.66	& 32.68	& 928.08 & 2,680.06	& 0.15  & 1,334.42\\
\bottomrule
\end{tabular}}
\label{tab:parameters}
\end{table}
\vspace{1\baselineskip}

\begin{figure}[H]
    \centering
    \begin{tikzpicture}[
    block/.style={
        rectangle, draw, rounded corners,
        align=center,
        minimum height=1cm,
        minimum width=2cm,
        font=\small
    },
    arrow/.style={-{Stealth[scale=1.2]}, thick}
]

\node[block, fill=yellow!20] (input) {Question\\and Context};

\node[block, fill=blue!30, right=1cm of input] (albert) {TinyBERT\\Feature Extractor};

\node[block, fill=blue!30, right=1cm of albert] (cls) {CLS extraction};

\node[block, fill=red!30, right=1cm of cls] (mlp) {Classification Head};

\draw[arrow] (albert) -- (cls);
\draw[arrow] (cls) -- (mlp);

\node[draw, dashed, inner sep=10pt, rounded corners, fit=(albert)(mlp)] (sr) {};
\node[above=5pt of sr.north] {Rejector};

\node[block, fill=green!20, right=1cm of mlp] (output) {Rejection Score};

\draw[arrow] (input) -- (sr);
\draw[arrow] (sr) -- (output);

\end{tikzpicture}
    \captionsetup{skip=2\baselineskip}
    \caption{Rejector architecture. The input is processed by TinyBERT \citep{devlin2018bert}, which serves as the feature extractor. The resulting CLS token is then used by the classification head to predict the allocation.}
    \label{fig:rejector_architecture}
\end{figure}

\subsubsection{Rejector Training}
Using the architecture illustrated in Figure~\ref{fig:rejector_architecture}, we train the rejector according to the surrogate deferral loss (SDL) defined in Lemma~\ref{lemma:surrogate}. Specifically, we adopt the standard multiclass log-softmax loss
\[
\Phi_{01}^{\nu=1}(r^i, x, j) = -\log\left(\frac{e^{r^i(x,j)}}{\sum_{j' \in \mathcal{A}} e^{r^i(x,j')}}\right),
\]
and optimize the rejector using the procedure detailed in Algorithm~\ref{algo_training1}. For all datasets---SQuADv1, SQuADv2, and TriviaQA---we employ the following hyperparameters:

\begin{table}[ht]
\centering
\caption{Key training hyperparameters for L2D rejector.}
\vspace{1\baselineskip}
\begin{tabular}{@{}ccccccc@{}}
\toprule
lr & epochs & batch size & dropout & optimizer & warmup ratio & weight decay \\
\midrule
$1\mathrm{e}{-5}$ & 6 & 128 & 0.1 & AdamW & 0.1 & 0.001 \\
\bottomrule
\end{tabular}
\label{tab:training_hyperparams}
\end{table}
\vspace{1\baselineskip}

For the query routers from \citep{ding2024hybridllmcostefficientqualityaware}, we use the following hyperparameters across all variants.
\begin{table}[ht]
\centering
\caption{Key training hyperparameters for $r_{\text{trans}}$, $r_{\text{prob}}$ and $r_{\text{deter}}$.}
\vspace{1\baselineskip}
\begin{tabular}{@{}ccccccc@{}}
\toprule
lr & epochs & batch size & dropout & optimizer & warmup ratio & weight decay \\
\midrule
$1\mathrm{e}{-2}$ & 10 & 32 & 0.1 & AdamW & 0.1 & 0.001 \\
\bottomrule
\end{tabular}
\label{tab:training_hyperparams_router}
\end{table}
\vspace{1\baselineskip}

\thispagestyle{empty}

\end{document}